\documentclass[a4paper, 11pt,headings=small,abstract]{scrartcl}
\usepackage[english]{babel}

\usepackage{amscd,amsmath,amsthm,array,hhline,mathrsfs, enumerate, booktabs,fancybox,calc,textcomp,xcolor,graphicx,bbm,xspace,nicefrac,stmaryrd,url,arcs,listings,amssymb}
\usepackage[title]{appendix}
\usepackage[citestyle = numeric-comp,bibstyle = numeric, firstinits=true, natbib = true, backend = bibtex,maxbibnames=6, url=false, isbn=false, sortcites=true]{biblatex}
\usepackage[semibold]{libertine}
\usepackage[upint,amsthm]{libertinust1math}
\usepackage[scr=boondoxupr, scrscaled = 1.0, cal =stixtwoplain]{mathalpha}
\usepackage[scaled=0.95]{inconsolata}
\usepackage{accents}
\usepackage{dsfont}
\DeclareMathAlphabet{\mathsf}{OT1}{\sfdefault}{m}{n}
\SetMathAlphabet{\mathsf}{bold}{OT1}{\sfdefault}{b}{n}
\DeclareMathAlphabet{\mathfrak}{U}{jkpmia}{m}{it}
\SetMathAlphabet{\mathfrak}{bold}{U}{jkpmia}{bx}{it}
\usepackage[utf8]{inputenc}
\usepackage{enumitem}
\usepackage{etoolbox}
\usepackage[normalem]{ulem}
\usepackage{mathtools}
\usepackage{geometry}\geometry{margin=30mm}
\usepackage{float}

\usepackage{bm}
\usepackage{tikz}
\usepackage[outdir =./]{epstopdf}
\numberwithin{equation}{section}
\RequirePackage{algorithm}
\RequirePackage{algorithmic}

\usepackage{chngcntr}
\counterwithin{figure}{section}

\usepackage[dvipsnames]{xcolor}
\usepackage[colorlinks,linkcolor=RedViolet,citecolor=myteal,anchorcolor=Green,urlcolor=BlueViolet]{hyperref}

\definecolor{WIMgreen}{RGB}{60 134 132}
\definecolor{UMblue}{RGB}{4 47 86}
\definecolor{myteal}{RGB}{0 123 137}
\definecolor{material_green}{RGB}{27 43 52}
\definecolor{dracula_pink}{RGB}{180 93 149}
\definecolor{dracula_blue}{RGB}{40 42 54}
\definecolor{dracula_turq}{RGB}{92 143 159}
\definecolor{dracula_orange}{RGB}{255 184 108}
\definecolor{material_petrol}{RGB}{2 119 189}
\definecolor{Purple}{RGB}{103 58 183}

\newcommand{\e}{\mathrm{e}}
\usepackage{verbatim}
\usepackage[title]{appendix}

\definecolor{refkey}{gray}{.5}
\definecolor{labelkey}{gray}{.5}
\setkomafont{sectioning}{\normalcolor\bfseries}
\setkomafont{descriptionlabel}{\normalcolor\bfseries}
\setkomafont{section}{\large}
\setkomafont{subsection}{\normalsize}
\setkomafont{subsubsection}{\normalsize}
\setkomafont{paragraph}{\normalsize}
\setkomafont{subparagraph}{\normalsize}
\setkomafont{author}{\large}
\setkomafont{title}{\large\selectfont\bfseries}
\setkomafont{date}{\normalsize}

\usepackage{subcaption}

\theoremstyle{plain}
\newtheorem{theorem}{Theorem}
\newtheorem{lemma}[theorem]{Lemma}

\newtheorem{proposition}[theorem]{Proposition}

\theoremstyle{definition}
\newtheorem{remark}[theorem]{Remark}

\providecommand{\customgenericname}{}
\newcommand{\newcustomtheorem}[2]{%
  \newenvironment{#1}[1]
  {%
   \renewcommand\customgenericname{#2}%
   \renewcommand\theinnercustomgeneric{##1}%
   \innercustomgeneric
  }
  {\endinnercustomgeneric}
}

\newcustomtheorem{customass}{Assumption}

\numberwithin{equation}{section}
\numberwithin{theorem}{section}

\DeclareMathOperator*{\argmin}{arg\,min}


\DeclareMathOperator{\PP}{{\mathbb P}}
\DeclareMathOperator{\E}{{\mathbb E}}
\DeclareMathOperator{\R}{{\mathbb R}}

\newcommand*\diff{\mathop{}\!\mathrm{d}}

\newcommand{\one}{\bm{1}}
\renewcommand{\hat}{\widehat}
\renewcommand{\tilde}{\widetilde}
 
\newcommand{\supp}{\operatorname{supp}}
\newcommand{\Ex}{\mathbb{E}}

\newcommand{\state}{\mathcal I}
\newcommand{\PR}{\mathbb{P}}
\newcommand{\coloneqq}{\coloneq}

\makeatletter
\DeclareRobustCommand{\cev}[1]{%
  \mathpalette\do@cev{#1}%
}
\newcommand{\do@cev}[2]{%
  \fix@cev{#1}{+}%
  \reflectbox{$\m@th#1\vec{\reflectbox{$\fix@cev{#1}{-}\m@th#1#2\fix@cev{#1}{+}$}}$}%
  \fix@cev{#1}{-}%
}
\newcommand{\fix@cev}[2]{%
  \ifx#1\displaystyle
    \mkern#23mu
  \else
    \ifx#1\textstyle
      \mkern#23mu
    \else
      \ifx#1\scriptstyle
        \mkern#22mu
      \else
        \mkern#22mu
      \fi
    \fi
  \fi
}

\makeatother

\numberwithin{equation}{section}
\numberwithin{theorem}{section}
\allowdisplaybreaks

\bibliography{example_paper}

\title{\fontsize{16}{19} \selectfont Beyond Fixed Horizons: A Theoretical Framework for Adaptive Denoising Diffusions}
\author{Sören Christensen\thanks{Kiel University, Department of Mathematics, Kiel, Germany\newline Email: \href{mailto:christensen@math.uni-kiel.de}{christensen@math.uni-kiel.de}} \and Jan Kallsen\thanks{Kiel University, Department of Mathematics, Kiel, Germany\newline Email: \href{mailto:kallsen@math.uni-kiel.de}{kallsen@math.uni-kiel.de}}  \and Claudia Strauch\thanks{Heidelberg University, Institute of Mathematics, Heidelberg, Germany\newline Email: \href{mailto:strauch@math.uni-heidelberg.de}{strauch@math.uni-heidelberg.de}} \and Lukas Trottner\thanks{University of Stuttgart, Institute for Stochastics and Applications, Stuttgart, Germany. \newline Email: \href{mailto:lukas.trottner@mathematik.uni-stuttgart.de}{lukas.trottner@mathematik.uni-stuttgart.de}}}
\date{\vspace{-15pt}}

\begin{document} 
\maketitle
\begin{abstract}
We introduce a new class of generative diffusion models that, unlike conventional denoising diffusion models, achieve a time-homogeneous structure for both the noising and denoising processes, allowing the number of steps to adaptively adjust based on the noise level. This is accomplished by conditioning the forward process using Doob's $h$-transform, which terminates the process at a suitable sampling distribution at a random time. The model is particularly well suited for generating data with lower intrinsic dimensions, as the termination criterion simplifies to a first-hitting rule. A key feature of the model is its adaptability to the target data, enabling a variety of downstream tasks using a pre-trained unconditional generative model. These tasks include natural conditioning through appropriate initialisation of the denoising process and classification of noisy data.
\end{abstract}

\section{Introduction}
\label{sec:introduction}
Denoising Diffusion Models \cite{ho20, song21} have gained significant attention in recent years due to their ability to generate high-quality data samples by iteratively denoising simple distributions based on the learned dynamics of a time-reversed forward noising process initalised in the target data distribution \cite{niu20, dhariwal21, ho22, watson23, yang23, evans24}. A key limitation of these models, however, is their reliance on a fixed time horizon, which introduces an artificial time dependency in the drift function of the backward process. As a result, the generative denoising process follows a predefined number of steps, regardless of the actual level of noise present along the generated path.

To overcome this limitation, we introduce a novel class of diffusion models that dynamically adapt to the state of the denoising process. By replacing the fixed deterministic time horizon with a random one and conditioning the behaviour of the forward process at a terminal time, our approach achieves greater flexibility and state awareness. The foundation of our method lies in Doob's $h$-transforms with respect to underlying exponential times. While the theoretical groundwork for this concept exists, its explicit application and detailed exploration -- particularly in comparison to deterministic time horizons -- remains underrepresented in the literature.

A key feature of our model is its inherent adaptability: the number of denoising steps dynamically adjusts based on the noise level in the data, introducing a stochastic element. This randomness not only has the potential to enhance the generation process, but also allows denoising to start from partially noisy data, naturally incorporating conditioning. Moreover, the time required for denoising serves as an intuitive measure of the distance between noisy observations and the underlying data distribution, providing the basis for tasks such as classification and anomaly detection. The model's architecture also supports natural conditioning mechanisms, allowing seamless adaptation to diverse tasks without the need for task-specific design modifications.

Thanks to its flexible, time-homogeneous structure, our model offers a fresh perspective on generative tasks that enhances the adaptability and versatility of diffusion models and establishes a robust foundation for transfer learning. 

\paragraph*{Contributions and structure}
Let us briefly summarize our main contributions and the structure of the paper: after discussing related work in Section \ref{sec:related_work}, we first present theoretical results on the $h$-transform and time reversal for exponential time horizons in Section \ref{sec:theory}. Building on this foundation, we develop a universal and flexible diffusion model, accompanied by the associated learning theory, in Section \ref{sec:model}. In this framework, we identify the \textit{polarity} of the data distribution as a central assumption for successful learning, offering a new perspective on the manifold hypothesis. Finally, we discuss the adaptability of the model across various application domains through transfer learning in\ Section \ref{subsec:features}. 

\section{Related work}
\label{sec:related_work}
This section reviews key works relevant to extending diffusion models beyond conventional time dependencies, incorporating elements such as random horizons, Doob's $h$-transform, and conditioning. 

\citet{ye22} introduce first hitting diffusion models, which use first hitting times to capture the intrinsic geometry of data manifolds. Technically, their approach is related to ours in that it uses a random time horizon (in their case via first hitting times) and makes use of Doob's $h$-transform. However, their method defines a backward process that ensures that the generated data lies on a predetermined manifold. In contrast, our approach inverts this perspective and uses the forward process to create a more flexible generative framework that is not constrained to predefined manifolds. Instead, the data manifold is dynamically learned. 

\citet{bortoli21} explore diffusion generative modeling through the lens of Schrödinger Bridges (SB) and solving transport problems. Unlike traditional methods that require running forward SDEs over long durations, SB techniques generate samples in finite time. However, their approach is limited to a deterministic time horizon, where there are well-established connections between Schrödinger Bridges and $h$-transforms. Further developments in this direction are also presented in \citet{peluchetti23} and \citet{shi23}. In contrast, our framework accommodates random time horizons, allowing for both finite and long durations. 

The paper \citet{zhao2024conditional} provides a comprehensive review of approaches to conditional sampling within generative diffusion models. It discusses methods that rely on joint distributions or pre-trained marginal distributions with explicit likelihoods to generate samples conditioned on certain information, addressing challenges in areas such as Bayesian inverse problems. In these approaches, the original unconditional processes are modified in various ways to introduce conditionality. Similarly, \citet{didi2023framework} unifies conditional training and sampling within a common framework based on the $h$-transform. However, the reliance on a fixed time horizon leads to notable differences from the approach presented in our work.

\citet{berner24} introduce an optimal control perspective on denoising diffusion models, demonstrating how the backward generative process naturally emerges as a solution to a suitable optimal control problem. Proposition \ref{prop:stoch_control} establishes a similar connection in our model.

\citet{denker24} use $h$-transform techniques to introduce fine-tuning of a pre-trained diffusion model for conditional data generation. Additionally to the natural conditioning aspect that our model provides without second stage training, we demonstrate in Section \ref{sec:fine_tuning} how an analogous procedure can be implemented for structured conditioning purposes in our modeling pipeline.

Finally, the companion paper \citet{christensen25} provides a detailed high-dimensional  mathematical analysis of the natural conditioning property of time-reversals of exponentially killed Brownian motions as a particular sub-class of the general class of generative models that we introduce, highlighting the immediate usefulness of our model beyond pure unconditional data generation purposes.

\section{Doob's $h$-transform and time-inversion from random times}\label{sec:theory}
Doob's $h$-transform is a versatile mathematical technique for adjusting the dynamics of a stochastic process. It formalizes the concept of conditioning the process on specific events occurring at a random time. This section outlines the key results relevant to our approach. For a more comprehensive discussion, including relevant literature and the proofs, readers are encouraged to consult the Appendix.

\subsection{Doob's $h$-transform for random time horizons}
In our approach, we will employ $h$-transforms of an underlying non-degenerate and symmetric $d$-dimensional diffusion process 
\[
	dZ_t=b(Z_t)\, dt+\sigma(Z_t)\, dW_t,
\]
to describe both the forward and backward process. Typically, $Z$ will be a Brownian motion or a symmetric Ornstein--Uhlenbeck process on $\state=\mathbb{R}^d$, but other self-dual processes are also possible.
The unkilled version of $Z$ has the $r$-Green kernel
\[G_r(x,y)\coloneqq\int_0^\infty \e^{-rt}p_t(x,y)\,dt,\]
where $p_t(x,y)$ are the symmetric transition densities of $Z$ with respect to a reference measure $m$. The reference measure \( m \) may, but does not necessarily, coincide with the standard Lebesgue measure. Up to scaling, $G_r$ describes the pdf of $Z$ at an independent exponential time. For a Brownian motion, for example, $G_r$ can be found explicitly, see Remark \ref{ex:G_r_BM}.

To specify the idea of conditioning at a random time, we introduce two major modifications for $Z$: 
\begin{enumerate}
    \item The drift is modified to attract the process towards the desired states. 
    \item A random stopping time $\zeta$, referred to as the \emph{lifetime}, is introduced to terminate the process at the point where the desired distribution is attained.
\end{enumerate}
Both modifications are seamlessly implemented using the $h$-transform. The $h$-transform is constructed using an $r$-excessive function $h$, which can be defined via a probability measure $\beta$ on the state space $\state$ and a reference state $x_0 \in \state$. To condition the process $Z$ to be distributed according to $\beta$ at its killing time $\zeta$ when initiated from $x_0$, we define
\begin{align*}
	h(x)\coloneqq\int \frac{G_r(x,y)}{G_r(x_0,y)}\, \beta(dy)=\int {G_r(x,y)}\,\kappa(dy),
\end{align*}
where $\kappa(dy)=\kappa_{x_0,\beta}(dy)=\frac{1}{G_r(x_0,y)}\beta(dy)$ is called \emph{representing measure} of $h$.  
Based on this, the $h$-transformed process $Z^h$ is now a Markov process defined by the transition kernel
\[\PR_x(Z^h_t\in dy)=\Ex_x\left[\frac{\e^{-rt}h(Z_t)}{h(x)}\mathbf{1}_{\{Z_t\in dy\}}\right].\]
The essential properties of $Z^h$ are as follows: 
\begin{proposition}\label{prop:properties_h-transf}
\begin{enumerate}
    \item $Z^h$ is killed at some random time $\zeta$, the lifetime, which is an a.s.\ finite random variable.
    \item Outside the support of $\beta$, $Z^h$ is an It\^o diffusion with dynamics
	\begin{align*}
dZ^h_t&=b^h(Z^h_t)\,dt+\sigma(Z^h_t)\,dW_t,\\
b^h(y)&=b(y)+\sigma(y)\sigma(y)^\top\nabla\log h(y).
	\end{align*}   
     \item The distribution of\ $Z^h$ at its lifetime is supported in the support of $\beta$ and given by
		\[
		    \PR_x(Z^h_{\zeta-}\in dy)=\frac{G_r(x,y)}{h(x)}\kappa(dy).
		\]
        In particular, 		\[
		    \PR_{x_0}(Z^h_{\zeta-}\in dy)={\beta}(dy).
		\]
\end{enumerate}
\end{proposition}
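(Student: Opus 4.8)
The plan is to establish the three assertions of Proposition~\ref{prop:properties_h-transf} by systematically exploiting the structure of the $r$-Green kernel and standard $h$-transform theory for Markov processes (as developed, e.g., in the work of Doob and later in potential theory). The starting observation is that $h(x) = \int G_r(x,y)\,\kappa(dy)$ is, by construction, an $r$-excessive function for $Z$: indeed $\e^{-rt}\Ex_x[h(Z_t)] = \int \e^{-rt}\Ex_x[G_r(Z_t,y)]\,\kappa(dy) \le \int G_r(x,y)\,\kappa(dy) = h(x)$, because $t \mapsto \e^{-rt}\Ex_x[G_r(Z_t,y)]$ is decreasing (the defining property of the $r$-potential kernel). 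This guarantees that the transition kernel $\PR_x(Z^h_t \in dy) = \Ex_x[\e^{-rt}h(Z_t)h(x)^{-1}\mathbf 1_{\{Z_t\in dy\}}]$ is sub-Markovian, hence $Z^h$ is a well-defined Markov process with a (possibly finite) lifetime $\zeta$.

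For part (1), I would compute $\PR_x(\zeta > t) = \Ex_x[\e^{-rt}h(Z_t)]/h(x)$ and show it tends to $0$ as $t\to\infty$. The key is that $h = G_r\kappa$ is a \emph{potential} (not merely $r$-excessive with a nonzero invariant part): since $\int G_r(x,y)\,\kappa(dy) < \infty$, dominated convergence together with $\e^{-rt}p_t(x,y) \to 0$ and the fact that $\int_0^\infty \e^{-rs}p_s(x,y)\,ds < \infty$ forces $\e^{-rt}\Ex_x[G_r(Z_t,y)] \to 0$; integrating against $\kappa$ gives $\PR_x(\zeta > t)\to 0$, i.e. $\zeta < \infty$ a.s. For part (2), away from $\supp\beta$ the representing measure $\kappa$ puts no mass, so locally $h$ is $r$-harmonic, meaning $\mathcal L h = r h$ where $\mathcal L$ is the generator of $Z$; hence $h$ is smooth there (elliptic regularity, using non-degeneracy of $\sigma$) and the classical Girsanov/$h$-transform computation yields that $Z^h$ solves the stated SDE with drift $b^h = b + \sigma\sigma^\top\nabla\log h$ — this is the standard change of measure $d\PR^h/d\PR|_{\mathcal F_t} = \e^{-rt}h(Z_t)/h(x)$ applied on excursions before hitting $\supp\beta$.

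For part (3), the identity $\PR_x(Z^h_{\zeta-}\in dy) = G_r(x,y)h(x)^{-1}\kappa(dy)$ is the heart of the matter and, I expect, the main obstacle. The approach is to use the Riesz-type decomposition: the co-excessive "last-exit" / Chung–Walsh description of the process killed according to an $r$-potential says that the position just before the killing time has law proportional to the representing measure weighted by the Green kernel from the starting point. Concretely, one writes, for a test function $f$,
\[
\Ex_x\bigl[f(Z^h_{\zeta-})\bigr] = \frac{1}{h(x)}\Ex_x\Bigl[\int_0^\infty \e^{-rt}\,(-dA_t)\,f(Z_t)\Bigr],
\]
where $A_t$ is the continuous additive functional with $\Ex_x[\int_0^\infty \e^{-rt}(-dA_t)] $ generating the potential $h$ via its Revuz measure $\kappa$; by the Revuz correspondence this equals $h(x)^{-1}\int G_r(x,y)f(y)\,\kappa(dy)$. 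The subtlety is making the "killing at rate proportional to $d(-h)(Z_t)$" heuristic rigorous — identifying the multiplicative functional $\e^{-rt}h(Z_t)/h(x)$ with killing governed by the additive functional whose Revuz measure is exactly $\kappa$, which requires the fine potential theory of the underlying symmetric diffusion (Dirichlet form theory or Blumenthal–Getoor). Once this is in place, the special case $x = x_0$ gives $\PR_{x_0}(Z^h_{\zeta-}\in dy) = G_r(x_0,y)h(x_0)^{-1}\kappa(dy) = G_r(x_0,y)\,\kappa(dy) = \beta(dy)$, using $h(x_0) = \int G_r(x_0,y)\kappa(dy) = \int \beta(dy) = 1$ and the definition $\kappa(dy) = G_r(x_0,y)^{-1}\beta(dy)$. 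I would relegate the measure-theoretic fine points to the Appendix and present in the main text the clean computational skeleton together with the explicit Brownian-motion instance from Remark~\ref{ex:G_r_BM} as a sanity check.
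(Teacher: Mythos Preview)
Your proposal is correct and follows the same underlying potential-theoretic logic as the paper, but the emphasis differs. The paper's own proof is almost entirely by citation to \cite{chung2005markov}: for part~(1) it invokes Theorem~13.50 there to write
\[
\PR_x(\zeta<\infty)=\int_0^\infty\!\!\int \e^{-rt}\tfrac{1}{h(x)}p_t(x,y)\,\kappa(dy)\,dt=\tfrac{1}{h(x)}\int G_r(x,y)\,\kappa(dy)=1,
\]
for part~(2) it writes down the generator identity $\mathbb{A}^h f=h^{-1}(\mathbb{A}(hf)-rhf)$ and uses $r$-harmonicity of $h$ off $\supp\beta$, and for part~(3) it simply cites Theorem~13.39 of \cite{chung2005markov} for the terminal distribution and then specializes to $x=x_0$ exactly as you do. Your route is more self-contained: for (1) you argue directly that $\PR_x(\zeta>t)=\e^{-rt}\Ex_x[h(Z_t)]/h(x)\to 0$ because $h=G_r\kappa$ is a genuine $r$-potential, and for (3) you sketch the Revuz-measure/additive-functional mechanism that lies behind Theorem~13.39 rather than citing it. What this buys you is a clearer picture of \emph{why} the killing distribution is $G_r(x,y)h(x)^{-1}\kappa(dy)$ --- precisely the ``killing governed by the additive functional with Revuz measure $\kappa$'' interpretation that drives later parts of the paper --- at the cost of having to make those Dirichlet-form/Blumenthal--Getoor fine points rigorous yourself. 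The paper buys brevity by outsourcing that to Chung--Walsh. Substantively the two arguments coincide; your version would serve well if you want the appendix to be readable without \cite{chung2005markov} at hand.
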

The proof is deferred to Appendix \ref{app:doob}.

\subsection{Optimality properties of the $h$-transform}
The $h$-transform is characterised by inherent optimality properties which underline its role as a canonical approach. We now demonstrate that it emerges as the solution to a stochastic stopping and control problem. In this context, the running cost component can be interpreted as minimizing the expected lifetime of the process while penalizing large drifts.
By framing the $h$-transform within this stochastic control problem, we establish a connection to the KL divergence.

\begin{proposition}\label{prop:stoch_control}
    We write $k(u)=r+\frac{1}{2}\|u\|^2$, $g(y)=-\log h(y)$. For an (admissible) control $u$ consider the controlled process $$dZ^u_t=(b(Z^u_t)+\sigma(Z^u_t)u_t)\,dt+\sigma(Z^u_t)\,dW_t,\;t< \tau_\beta,$$ 
    where $\tau_\beta=\tau_\beta^u$ denotes the first entrance time into the support of $\beta$. The stochastic stopping and control problem of minimizing 
\begin{align}\label{eq:control}
J(u,\tau,x) \coloneqq \Ex_x\left[\int_0^\tau k(u_t)\,dt+g(Z^u_{\tau-})\right],\;\qquad v(x)=\inf_{u,\;\tau\le\tau_\beta^u} J(u,\tau,x)
\end{align} 
in both $u$ and $\tau\leq \tau_\beta$ is solved as follows:
\begin{enumerate}
\item $v(x)=g(x)=-\log h(x)$.
\item All pairs $(u^*,\tau)$ for the Markov feedback control $u^*$ given by
\[
u^*(x)\;=\;-\sigma(x)^\top \nabla g(x),
\] 
and arbitrary stopping times $\tau\leq \tau_\beta$ are optimal.
\item For any admissible $u$,
\[
J(u,\tau,x)-v(x)
=\mathrm{KL}\!\left(\PR_x^{Z^u}\big|_{\mathcal F_{\tau_\beta}} \,\Big\Vert\, \PR_x^{Z^h}\big|_{\mathcal F_{\tau_\beta}}\right),
\]
so the Kullback--Leibler divergence between the controlled law and the $h$-transformed law (up to $\tau_\beta$) equals the variational gap.
\end{enumerate}
\end{proposition}

The proof can be found in Appendix \ref{app:stoch_control}. We note here that for the cases we are interested in later, the first entrance time into the support just happens to coincide with the lifetime $\zeta$.

\subsection{Connection to time-inversion}\label{subsec:time_reversal}
Compared to the deterministic case, the time reversal from a random time is much less elaborated in the literature. In\ terms of the $h$-transform, however, the results are very clear: for the $h$-transformed process $Z^h$ with (finite) lifetime $\zeta$, we consider the time-reversed process
\begin{align*}
    \cev{Z^h_s} \coloneqq
        Z^h_{\zeta-s},&\;0<s<\zeta,
\end{align*}
killed at $s=\zeta$.
Let also $\alpha$  be a fixed initial distribution of $Z^h$,
and define
\begin{align}\label{eq:back_h}
    \cev{h}(x)=\int\frac{G_r(x,y)}{h(y)}\,\alpha(dy).
\end{align}
We then have the following key result, whose proof is given in Appendix \ref{app:doob}.
\begin{proposition}\label{prop:time_inversion}
\begin{enumerate}
    \item $\cev{Z^h}$ has the same distribution as $Z^{\cev{h}}$ with initialisation $Z^{\cev{h}}_0 \sim Z^h_{\zeta-}$; in particular,
    	\begin{align*}
d\cev{Z^h_s}&=\cev{b^h}(\cev{Z^h_s})\, ds+\sigma(\cev{Z^h_s})\, d\cev W_s,\\\cev{b^h}(y)&=b(y)+\sigma(y)\sigma(y)^\top\nabla\log \cev h(y),
	\end{align*}
    $\cev W$ a Brownian motion, outside the support of $\alpha$.
    \item $\cev{Z^h}$ is killed on the support of $\alpha$.
    \item When started in $x \notin \supp \alpha$, the distribution of  $Z^{\cev{h}}$ at its lifetime $\zeta$ is given by
\begin{align*}
    \PR_x(Z^{\cev{h}}_{\zeta-}\in dy)=\frac{G_r(x,y)}{\cev{h}(x)h(y)}\,\alpha(dy).
\end{align*}
In particular, for $\PP_x(Z^h_{\zeta-} \in \cdot)$-a.e.\ $x \notin \supp \alpha$ it holds for the time reversed process $\cev{Z^h}$ that 
\begin{align*}
    \PR_x(\cev{Z^h}_{\hspace{-5pt}\zeta-}\in dy)=\frac{G_r(x,y)}{\cev{h}(x)h(y)}\,\alpha(dy).
\end{align*}
\end{enumerate}

\end{proposition}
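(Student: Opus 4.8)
The plan is to pin down the law of $\cev{Z^h}$ through its finite-dimensional distributions, reducing them to Markovian increments of the $r$-killed base process and then invoking the symmetry $p_t(x,y)=p_t(y,x)$ of $Z$ as the only structural input. Write $Z^{(r)}$ for $Z$ killed at rate $r$; it has sub-transition densities $\e^{-rt}p_t(x,y)$ with respect to $m$, $r$-Green kernel $G_r$, and $Z^h$ is its $h$-transform. From the definition of $Z^h$ one reads off the sub-transition density $p^h_t(x,y)=\e^{-rt}h(y)p_t(x,y)/h(x)$ with respect to $m$. A second, slightly less obvious ingredient is the joint law of lifetime and terminal position; computing it for $\kappa=\delta_w$ directly from $G_r(x,w)=\int_0^\infty\e^{-rt}p_t(x,w)\,dt$ and extending by linearity of the $h$-transform in its representing measure gives
\[
 \PR_z\!\left(\zeta\in d\tau,\ Z^h_{\zeta-}\in dy\right)=\frac{\e^{-r\tau}p_\tau(z,y)}{h(z)}\,\kappa(dy)\,d\tau,
\]
which integrates in $\tau$ to the terminal law in Proposition~\ref{prop:properties_h-transf}(3).

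Combining these two facts with the ordinary Markov property at \emph{deterministic} epochs $0<v_1<\dots<v_n<t$ and telescoping the ratios $h(w_i)/h(w_{i-1})$ yields the bridge formula
\[
 \PR_z\!\left(Z^h_{v_i}\in dw_i\ \forall i,\ \zeta\in dt,\ Z^h_{\zeta-}\in dy\right)=\frac{\e^{-rt}}{h(z)}\,p_{v_1}(z,w_1)\Bigl(\textstyle\prod_{i=2}^{n}p_{v_i-v_{i-1}}(w_{i-1},w_i)\Bigr)p_{t-v_n}(w_n,y)\,\kappa(dy)\,dt\prod_{i=1}^n m(dw_i).
\]
The decisive feature is that all interior weights $h(w_i)$ cancel, so the only surviving $h$-dependence is the initial $1/h(z)$ and the terminal measure $\kappa$; averaging the starting point over $\alpha$ just replaces $1/h(z)$ by integration of $\alpha(dz)/h(z)$ against $p_{v_1}(\cdot,w_1)$.

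Now fix $0<s_1<\dots<s_n$ and set $v_i=\zeta-s_{n+1-i}$, so that the event $\{\cev{Z^h_{s_j}}\in dy_j\text{ for all }j\}$ translates into $\{Z^h_{v_i}\in dy_{n+1-i}\text{ for all }i\}$ and, on $\{s_n<\zeta\}$, the increments become $v_1=t-s_n$, $v_i-v_{i-1}=s_{n+2-i}-s_{n+1-i}$, $t-v_n=s_1$. Plugging into the bridge formula, reversing the order of every transition factor via $p_u(a,b)=p_u(b,a)$, and integrating the now-terminal coordinate against $\alpha/h$, one obtains, for $\zeta\in dt$ and $Z^h_{\zeta-}\in dx$,
\[
 \PR_\alpha\!\left(Z^h_{\zeta-}\in dx,\ \cev{Z^h_{s_j}}\in dy_j\ \forall j,\ \zeta\in dt\right)=\e^{-rt}p_{s_1}(x,y_1)\Bigl(\textstyle\prod_{j=2}^{n}p_{s_j-s_{j-1}}(y_{j-1},y_j)\Bigr)\!\left[\int\!\frac{\alpha(dz)}{h(z)}p_{t-s_n}(y_n,z)\right]\kappa(dx)\,dt\prod_{j=1}^n m(dy_j).
\]
This is precisely $\cev h(x)\,\kappa(dx)$ times the analogous expression for $Z^{\cev h}$ started at $x$ --- the $h$-transform of $Z^{(r)}$ with representing measure $\cev\kappa(dy)=\alpha(dy)/h(y)$, whose sub-transition density is $\e^{-rt}\cev h(y)p_t(x,y)/\cev h(x)$ and whose lifetime/terminal law is $\e^{-rt}p_t(x,y)\cev\kappa(dy)\,dt/\cev h(x)$, by the same two input facts. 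Since Proposition~\ref{prop:properties_h-transf}(3) also yields $\PR_\alpha(Z^h_{\zeta-}\in dx)=\int G_r(z,x)h(z)^{-1}\alpha(dz)\,\kappa(dx)=\cev h(x)\,\kappa(dx)$, dividing shows that conditionally on $Z^h_{\zeta-}=x$ the reversed process $\cev{Z^h}$ has exactly the law of $Z^{\cev h}$ started at $x$. All three claims then follow at once: part~1 from Proposition~\ref{prop:properties_h-transf}(2) applied to the $r$-excessive function $\cev h$, which is $r$-harmonic off $\supp\alpha$; part~2 because $Z^{\cev h}$ is killed on $\supp\cev\kappa=\supp\alpha$; and part~3 from Proposition~\ref{prop:properties_h-transf}(3) applied to $\cev h$, giving $\PR_x(\cev{Z^h_{\zeta-}}\in dy)=G_r(x,y)\cev\kappa(dy)/\cev h(x)=G_r(x,y)\alpha(dy)/(\cev h(x)h(y))$.

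The single genuine obstacle is that the reversal epochs $\zeta-s_j$ are not stopping times, so the Markov property is unavailable along them; the bridge formula is exactly the device that bypasses this by first conditioning on the pair $(\zeta,Z^h_{\zeta-})$, after which only Markovian increments of $Z^{(r)}$ remain and symmetry closes the argument. Everything else is routine and is dealt with as in Proposition~\ref{prop:properties_h-transf}: joint measurability and strict positivity of $p_t$, finiteness of $\cev h$ so that $Z^{\cev h}$ is well defined, a.s.\ finiteness and strict positivity of $\zeta$ under $\PR_\alpha$, and --- in the low-dimensional case where $\beta$, hence $\kappa$, is singular with respect to $m$ --- interpreting $\e^{-r\tau}p_\tau(z,y)h(z)^{-1}\kappa(dy)\,d\tau$ as a measure in $(\tau,y)$ rather than a density.
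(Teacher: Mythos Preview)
Your argument is correct and considerably more explicit than the paper's. The paper's proof is essentially a one-line citation: it invokes Theorem~13.34 in Chung--Walsh, \emph{Markov Processes, Brownian Motion, and Time Symmetry}, which is the Nagasawa--Chung time-reversal theorem identifying the reverse of an $h$-process (from its lifetime) as another $h$-process of the dual, and then reads off parts~2 and~3 from Proposition~\ref{prop:properties_h-transf}. You instead reprove the relevant instance of that theorem from scratch by writing out the joint density of $(Z^h_{v_1},\dots,Z^h_{v_n},\zeta,Z^h_{\zeta-})$, observing that all interior $h$-factors telescope away so that only the base transition densities $p_{u}(\cdot,\cdot)$ remain, and then flipping every factor via the symmetry $p_u(a,b)=p_u(b,a)$. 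This is precisely the mechanism behind Nagasawa's theorem, specialized to the self-dual setting assumed in the paper; your computation makes transparent that symmetry of $p_t$ is the sole structural input, which the citation-based proof hides. Two minor remarks: your phrase ``linearity of the $h$-transform in its representing measure'' is better described as the Martin-type mixture decomposition (the $h$-process is a $G_r(z,\cdot)\kappa/h(z)$-mixture of Dirac $h$-processes), since the map $h\mapsto Z^h$ is not linear; and the substitution $v_i=t-s_{n+1-i}$ is legitimate because the bridge formula already conditions on $\zeta=t$, after which the $v_i$ are deterministic. What your approach buys is a self-contained argument that does not send the reader to a reference; what the paper's approach buys is brevity and the reassurance that the result holds at the full generality of Hunt processes rather than just for the diffusion setting actually needed here.
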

The interpretation of the time reversal in terms of the $h$-transform has interesting consequences. Let us recall the concept of a polar set: for $Z^h$, $A\subseteq \state$ is called \emph{polar} if $\PR_x(Z_t^h\in A\mbox{ for some }t>0 )=0$ for all $x\in\state$. Typical examples of polar sets for diffusions are sets on low-dimensional submanifolds. For instance, if $Z^h$ is a $d$-dimensional Brownian motion, any set with Hausdorff dimension less than $d-2$ is polar, see e.g. \citet{morters2010brownian}; a related result for general diffusions is also provided in \citet{rama88}.

\begin{proposition}\label{prop:polar}
Assume that the support of the initial distribution $\alpha$ is polar for $Z^h$. Then, $\cev{Z^h}$ is killed at the first entry into the support of $\alpha$. In particular, the distribution of $\cev{Z^h}$ is entirely characterised by the drift $\cev{b^h}$ and the support of $\alpha$.
\end{proposition}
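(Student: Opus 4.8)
The plan is to combine the time-reversal identity of Proposition~\ref{prop:time_inversion} with the polarity hypothesis, which is, crucially, a statement about $Z^h$ itself rather than about $\cev{Z^h}$. From Proposition~\ref{prop:time_inversion} I would first record two facts about $\cev{Z^h}$ (which in law equals $Z^{\cev h}$): on $(0,\zeta)$ and outside $\supp\alpha$ it is an It\^o diffusion with drift $\cev{b^h}$ and coefficient $\sigma$, so its paths are continuous there; and it is killed on $\supp\alpha$, i.e.\ $\cev{Z^h}_{\zeta-}\in\supp\alpha$ almost surely. The latter is also visible directly from $\cev{Z^h}_{\zeta-}=\lim_{s\uparrow\zeta}Z^h_{\zeta-s}=Z^h_{0+}=Z^h_0\sim\alpha$. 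Granted these, the proposition reduces to the single claim that $\cev{Z^h}$ does not reach $\supp\alpha$ at any time strictly before $\zeta$.

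For that claim the key step is merely to unwind the definition $\cev{Z^h}_s=Z^h_{\zeta-s}$: the event $\{\cev{Z^h}_s\in\supp\alpha\text{ for some }s\in(0,\zeta)\}$ coincides with the event that $Z^h$ visits $\supp\alpha$ at the positive time $\zeta-s\in(0,\zeta)$. Although $Z^h$ starts from $\alpha$ and hence $Z^h_0\in\supp\alpha$, polarity of $\supp\alpha$ for $Z^h$ asserts $\PR_x(Z^h_t\in\supp\alpha\text{ for some }t>0)=0$ for \emph{every} $x\in\state$, in particular for $\alpha$-a.e.\ $x$; integrating against $\alpha$ gives $\PR_\alpha(Z^h_t\in\supp\alpha\text{ for some }t>0)=0$. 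Thus almost surely $Z^h_t\notin\supp\alpha$ for all $t>0$, and therefore almost surely $\cev{Z^h}_s\notin\supp\alpha$ for all $s\in(0,\zeta)$. Together with the path-continuity on $(0,\zeta)$ and $\cev{Z^h}_{\zeta-}\in\supp\alpha$, this identifies $\zeta$ with the first entry time of $\cev{Z^h}$ into $\supp\alpha$, understood --- as polarity forces --- in the limiting sense $\zeta=\inf\{s>0:\cev{Z^h}_{s-}\in\supp\alpha\}$, the process approaching $\supp\alpha$ only as $s\uparrow\zeta$. This is exactly the coincidence anticipated in the remark following Proposition~\ref{prop:stoch_control}.

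For the concluding statement of the proposition, once $\zeta$ has been recognized as this first entry time it becomes a measurable functional of the sample path of the \emph{unkilled} It\^o diffusion $dX_t=\cev{b^h}(X_t)\,dt+\sigma(X_t)\,dW_t$ alone. Hence the law of $\cev{Z^h}$ as a killed process (given its starting point) --- in particular its terminal law, as computed in Proposition~\ref{prop:time_inversion} --- is completely determined by that diffusion (equivalently, by the drift $\cev{b^h}$, with $\sigma$ fixed throughout) together with the set $\supp\alpha$; no finer feature of $\alpha$, such as its density or the representing measure $\cev{\kappa}(dy)=h(y)^{-1}\alpha(dy)$, enters the killing rule. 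This is the precise point of contrast with a generic $h$-transform, whose lifetime is governed by the whole representing measure rather than only by its support.

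I expect the only genuine work to be the endpoint and regularity bookkeeping, not any new estimate: one must check that $\cev{Z^h}$ indeed has continuous paths on $(0,\zeta)$, so that ``$\cev{Z^h}_s\notin\supp\alpha$ for all $s$'' is more than an almost-everywhere statement; that the left limit $\cev{Z^h}_{\zeta-}$ exists and lies in $\supp\alpha$; and that the degenerate case $\supp\alpha\cap\supp\beta\neq\emptyset$, which could force $\zeta=0$, is excluded in the regime of interest, where $\supp\beta$ and $\supp\alpha$ are disjoint. No transfer argument for polarity is needed because the hypothesis already concerns $Z^h$; were it instead imposed on the base process $Z$, one would additionally invoke the mutual absolute continuity of the laws of $Z$, $Z^h$ and $Z^{\cev h}$ on $\{t<\zeta\}$ --- valid since $h$ and $\cev h$ are strictly positive and finite off the respective supports --- to move polarity between them.
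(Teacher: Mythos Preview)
Your proposal is correct and follows exactly the same idea as the paper's proof, which is essentially a one-liner: polarity of $\supp\alpha$ for $Z^h$ means the forward process, though started in $\supp\alpha$, never revisits it at positive times, so the time-reversed process reaches $\supp\alpha$ only at its lifetime. Your version simply spells out the path-continuity and endpoint bookkeeping (and the ``in particular'' clause) that the paper leaves implicit.
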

\begin{proof}
The assumption implies that the starting distribution of ${Z^h}$ lies in a polar set, which is not visited during the rest of the lifetime. This means that the time-reversed process $\cev{Z^h}$ only visits this set at the time of killing.
\end{proof}

\subsection{Key examples}\label{subsection:key_examples}
\begin{enumerate}
    \item In the case $\beta(dy)=rG_r(x_0,y)\,m(dy)$ (the distribution of\ $Z$ at the exponential time when started in $x_0$), we have
\[
	h(x)=\int \frac{G_r(x,y)}{G_r(x_0,y)}\,\beta(dy)=\int r{G_r(x,y)}\,m(dy)=1.
\]
Then, $Z^h$ has the same dynamics as $Z$ and $\zeta$ is an independent $\mbox{Exp}(r)$-time. The time-reversed dynamics are determined by $
    \cev{h}(x)=\int{G_r(x,y)}\,\alpha(dy)$.
\item In the case $\beta(dy)=\delta_{x_1}$ for some $x_1\in\state$, we have
\[
	h(x)={G_r(x,x_1)}/c_{x_1},\;c_{x_1}={G_r(x_0,x_1)}.
\]
$h$ is thus independent of the starting point, except for a multiplicative constant that is irrelevant for the $h$-transform, so we may choose $c_{x_1}=1$ w.l.o.g.\ $Z^h$ is thus an exponential bridge killed in $x_1$, no matter where the starting point was, with drift
\[b_h(x)=b(x)+\sigma(x)\sigma(x)^\top\nabla\log  G_r(x,x_1).\]
In this case, 
$$  \cev{h}(x)=\int\frac{G_r(x,y)}{G_r(x_1,y)}\,\alpha(dy).$$
\item In case $Z=W$ is a Brownian motion, we consider radially symmetric functions $h(x)=f(|x|)$ of the form
\[h(x)=\int_{\partial B_R}G_r(x,y)\,\sigma_R(dy),\]
where $\sigma_R$ denotes the surface measure on the sphere with radius $R$.  We obtain that (except for scaling)
\[f(y)=y^{-\nu}I_{\nu}(y\sqrt{2r}),\;\;\nu=\frac{d-2}{2},\]
$I_{\nu}$ is a modified Bessel function of the first kind,
and this results in the forward process $Z^h$ with drift
\[b^h(x)=\nabla\log h(x)=\sqrt{2r}\frac{I_{\nu+1}(\lVert x\rVert\sqrt{2r})}{I_{\nu}(\lVert x \rVert\sqrt{2r})}\frac{x}{\lVert x \rVert},\]
killed when exiting $B_R$ (in the sense of a last exit).
For large dimensions $d$ and moderate values of\ $\lVert x \rVert$, it holds that
$b^h(x)\approx\frac{2r}{d}x$ (see 10.41 in \cite{NIST:DLMF}),
so that the forward process $Z^h$ can be approximated using the (non-stationary) Ornstein--Uhlenbeck process dynamics
\begin{align}\label{eq:OU-approx}
    d\tilde Z_t=\frac{2r}{d}\tilde Z_t\,  dt+dW_t.
\end{align}

\end{enumerate}

\section{The proposed model}\label{sec:model}
Standard diffusion generative models rely on a deterministic time horizon, leading to time-dependent backward processes and inefficiencies in the forward process due to excessive noise application. Two promising directions for improving these models can be identified: (1) replacing the deterministic time horizon with a randomised one to achieve time-homogeneous backward dynamics, and (2) introducing conditioning in the forward process to reduce the need for extensive noise application.

We achieve the simultaneous implementation of both modifications in a unified framework employing the $h$-transform described above. To this end, we use an appropriate choice of the process $Z^h$ as our forward process and $\cev{Z^h}$ as backward process, as discussed next.

\subsection{Three possible implementations for unconditional sampling}
The examples from Section \ref{subsection:key_examples} immediately suggest possible realisations of our framework. Specifically, we use $Z^h$ as a forward process initialised in the data distribution $\alpha$ to learn the drift of the backward process $\cev{Z^h}$ (see the following sections for details). To generate (unconditional) samples from $\alpha$, we require suitable initial distributions for $\cev{Z^h}$, which are (approximately) given in the following examples:

\begin{enumerate}
    \item \textbf{Long exponential time horizons:}  
    If we choose an ergodic diffusion $Z$ (e.g., an Ornstein--Uhlenbeck process) and set $h=1$, we can pick $r$ small and let $\zeta$ follow an $\mathrm{Exp}(r)$ distribution. Under this setup, $\PR_\alpha(Z^h_{\zeta-} \in dy)$ is close to the stationary distribution of $Z$. Thus, we may initialize the backward process $\cev{Z^h}$ in the stationary distribution of $Z$ (if it exists).

    \item \textbf{Exponential bridge:}  
    In the second example, the forward process $Z^h$ is an exponential bridge targeting a specific state $x_1$. Here, we can sample from the data distribution by starting the backward process $\cev{Z^h}$ at $Z^h_{\zeta-}=x_1$.

    \item \textbf{Hitting a large sphere:}  
    In the third example, if $\alpha$ is compactly supported and the radius $R$ is chosen sufficiently large, $\PR_\alpha(Z^h_{\zeta-} \in dy)$ is approximately a uniform distribution over the sphere, which can then serve as an initial distribution for $\cev{Z^h}$. If helpful, the forward process can be approximated by \eqref{eq:OU-approx}, which allows a more direct sampling.
\end{enumerate}

In the idealised setting where $\alpha$ is known analytically,  generation via the backward process for unconditional sampling, i.e., sampling from\ $\alpha$, proceeds as described in Algorithm \ref{alg:gen_all_known}. This algorithm is derived from the results presented in Section \ref{subsec:time_reversal}.  
\begin{algorithm}[tb]
   \caption{Idealised generation when no learning is necessary}
   \label{alg:gen_all_known}
\begin{algorithmic}
   \STATE {\bfseries Input:}  $r> 0$, Green kernel $G_r$ of diffusion $Z$, forward transform $h$, target distribution $\alpha$, backward initial distribution $\beta^\prime \approx \PR_\alpha(Z^h_{\zeta-})$
    \STATE Set $\cev{h}(x) = \int \frac{G_r(x,y)}{h(y)}\, \alpha(dy)$
    \STATE {\bfseries Generation:}
    \STATE draw $x \sim \beta^\prime$
   \STATE Simulate path $(y_t)_{t \in [0,\zeta)}$ of $h$-transform  $Y = \cev{Z^h}$ initialised in $x$ until lifetime $\zeta$, with dynamics given in Proposition \ref{prop:time_inversion};
  \STATE {\bfseries Output:} $y_{\zeta-}$
\end{algorithmic}
\end{algorithm}
The simulation of the backward process up to the lifetime $\zeta$ can be implemented using standard numerical methods, such as the Euler--Maruyama scheme. In the most general setup, the backward process terminates at a randomised Markovian stopping time of the form
\[
\zeta = \inf\{t : A^h_t \geq E\},
\]
where $E$ is an independent exponential random variable and $A^h_t$ is an additive functional supported within the data distribution, which describes the measure of killing. For a detailed mathematical discussion and numerical approximations of $A^h$, see \cite{bally1989approximation,stoica1992approximation,christensen2024existence}. This framework directly enables simulation but poses challenges for estimation. However, the situation becomes significantly simpler when the data distribution $\alpha$ resides within a polar set, as will be discussed in the following section.

\subsection{Polarity hypothesis}
The target distribution $\alpha$ is typically unknown in practice. Consequently, we lack direct access to $\cev{h}$ as defined in \eqref{eq:back_h}, which is required to employ the backward generating process $Y = \cev{Z^h}$. Thus, $\cev{h}$ must be inferred from the data. As discussed in Section \ref{sec:theory}, both the drift of the backward process and the mechanism governing killing must be learned. While the killing mechanism can be described by a measure on the state space $\state$ (the killing measure), this adds significant complexity to the learning process.

A natural approach is to terminate the backward process as soon as a meaningful element of the data distribution is encountered. Proposition \ref{prop:polar} offers a criterion for when this is feasible: specifically, when the data distribution is concentrated in a polar set for the forward process, such as a lower-dimensional manifold. This aligns naturally with the manifold hypothesis: this well-explored concept in the literature assumes that high-dimensional data typically lie on or near a  low-dimensional manifold (or a union of such) \cite{loaiza24} and has been empirically verified for image data \cite{pope21, brown23}.

In our framework, we adopt a slightly more general assumption, referred to as the \emph{polarity hypothesis}, which posits that the data resides in a polar set. This generalisation allows us to disentangle learning of the drift and the killing mechanism, as detailed in the following sections.

\subsection{Learning the drift}
Here we focus on the case where the polarity hypothesis is satisfied, such  that by Proposition \ref{prop:polar} the lifetime of $\cev{Z^h}$ is given by the first hitting time of $\Omega \coloneq \operatorname{supp} \alpha$, which we assume to be known for the moment.
Based on Proposition \ref{prop:time_inversion}, we  aim to fit the data to a class of time-homogeneous diffusion processes 
\begin{align*}
dY^\theta_s &= \cev{b^\theta}(Y^\theta_s)\, ds + \sigma(Y^\theta_s) \, dW_s, \quad Y^\theta_0 \sim \PR_\alpha(Z^h_{\zeta-} \in \cdot),\\
\cev{b^\theta}(y) &= b(y) + \sigma(y)\sigma(y)^\top s^\theta(y),
\end{align*}
induced by a suitable class of candidate functions $\mathcal{S} = \{s^\theta: \theta \in \Theta\}$, and for an optimization objective and corresponding optimiser $\theta^\ast$ to be determined below, run the backward process $Y^{\theta^\ast}$ until an appropriate stopping time. Ideally, we would like to stop in the first hitting time of $\Omega$ by $Y^{\theta^\ast}$, i.e., in $\zeta^{\theta^\ast} = \tau_\Omega(Y^{\theta^\ast})$, denoting $\tau_A(X) \coloneqq \inf\{t: X_t \in A\}$ for a process $X$ and a set $A \subset \R^d$.  However, in the typical situation where $\Omega$ is a lower dimensional manifold with no simple structure, designing a candidate class $\mathcal{S}$ that guarantees $\PR(\zeta^{\theta^\ast} < \infty) = 1$ is a difficult and  highly problem-specific task (e.g., a $C^2$ submanifold $\Omega$ is polar for non-degenerate diffusions if its Hausdorff dimension is no  larger than $d-2$ \citep[Chapter 11]{friedman75}, \citep{rama88}). Instead, for some small $\varepsilon > 0$, we consider the closed $\varepsilon$-environment $\Omega_\varepsilon \coloneqq \{x \in \R^d: d(x,\Omega) \leq \varepsilon\}$ and run $Y^{\theta^\ast}$ until 
\[\zeta^{\theta^\ast}_\varepsilon = \tau_{\Omega_\varepsilon}(Y^{\theta^\ast}).\]
Thus, we are not targeting $\alpha$ directly, but the distribution  of $\cev{Z^h_{\zeta_\varepsilon}}$ for 
\[\zeta_\varepsilon \coloneqq \tau_{\Omega_\varepsilon}(\cev{Z^h}).\] 
This is justified for small $\varepsilon > 0$, since by continuity of the sample paths, we have that $\tau_{\Omega_\varepsilon}(\cev{Z^h})$ almost surely increases to $\zeta$ as $\varepsilon \downarrow 0$ if $\Omega$ is closed, and therefore obtain the  a.s.\ convergence 
\(\lim_{\varepsilon \downarrow 0} \cev{Z^h_{\zeta_\varepsilon}} = \cev{Z^h_{\zeta-}} \sim \alpha. \)
This is comparable to early stopping of the generating process in standard diffusion models with deterministic time horizon, which implies that the generative model targets the original data set blurred by some small Gaussian noise. On a forward time scale, the first entrance time in $\Omega^\varepsilon$ of the backward process $\cev{Z^h}$ corresponds to the last exit time $\sigma_\varepsilon$ of $\Omega_\varepsilon$ by the forward process $Z^h$, that is, 
\[\sigma_\varepsilon \coloneqq \sup\{t < \zeta: Z^h_t \in \Omega_\varepsilon\} = \zeta - \zeta_\varepsilon.\]
As a natural optimisation objective, we therefore target the Kullback--Leibler divergence between the law  $\cev{\mathbb{P}}{}^{\varepsilon}$ of the theoretical generating process $\cev{Z^h}$ killed in $\zeta_\varepsilon$ and the law $\mathbb{P}^{\theta,\varepsilon}$ of the parametrised generating process $Y^\theta$ killed at first entrance into $\Omega_\varepsilon$, i.e., at $\zeta^{\theta}_\varepsilon \coloneqq \tau_{\Omega_\varepsilon}(Y^{\theta})$. To this end, we set 
\begin{align*} 
\theta^\ast &\in \argmin_{\theta \in \mathcal{S}} \mathcal{L}_{\text{ex}}(\theta), \\ 
\mathcal{L}_{\text{ex}}(\theta) &\coloneqq \Ex\Big[\int_0^{\zeta_\varepsilon} \lVert  s^\theta(\cev{Z^h_t}) - \nabla \log \cev{h}(\cev{Z^h_t}) \rVert^2 \, dt  \Big]\\
&= \Ex\Big[\int_{\sigma_\varepsilon}^{\zeta} \lVert  s^\theta(Z^h_t) - \nabla \log \cev{h}(Z^h_t) \rVert^2 \, dt  \Big]
\end{align*}
which is the natural analog to the \textit{explicit score matching} objective in standard diffusion models.
Then, given sufficient integrability conditions, Girsanov's theorem indeed yields that 
\begin{align*}
 \mathrm{KL}\big(\cev{\PR}{}^{\varepsilon} \,\|\, \mathbb{P}^{\theta,\varepsilon}  \big)
 &= \frac{1}{2}\Ex\Big[\int_0^{\zeta_\varepsilon} \lVert \sigma^{-1}(\cev{Z^h_t})( \cev{b^h}(\cev{Z^h_t}) - \cev{b^\theta}(\cev{Z^h_t})) \rVert^2 \, dt  \Big]   \\ 
 &\leq \frac{C_\sigma}{2}\mathcal{L}_{\mathrm{ex}}(\theta),
\end{align*}
for $C_\sigma \coloneqq \sup_{x \in \R^d} \lVert \sigma^{-1}(x) \rVert^2$. Consequently,  
\[
\mathrm{KL}\big(\PR(\cev{Z^h}_{\hspace{-5pt}\zeta_\varepsilon} \in \cdot ) \,\|\, \PR(Y^{\theta^\ast}_{ \zeta^{\theta^\ast}_\varepsilon} \in \cdot ) \big)\lesssim  \min_{\theta \in \mathcal{S}} \Ex\Big[\int_{0}^{\zeta_\varepsilon} \lVert  s^\theta(\cev{Z^h_t}) - \nabla \log \cev{h}(\cev{Z^h_t}) \rVert^2 \, dt  \Big],
\]
showing that if $\mathcal{S}$ is rich enough to guarantee a high approximation quality of $\nabla \log \cev{h}$ on $(\Omega_\varepsilon)^{\mathrm{c}}$ (e.g., a suitable class of neural networks), our generative process produces high quality samples for slightly blurred target data. However, the general inaccessibility of $\cev{h}$, and thus of $\mathcal{L}_{\text{ex}}$, requires us to determine a tractable training objective that is comparable to $\mathcal{L}_{\text{ex}}$. This is provided by our next result, which in our proposed model can be understood as an analog of the \textit{denoising score matching loss} \citep{vincent11} employed in standard diffusion models.

\begin{proposition} \label{prop:score}
Given sufficient integrability conditions, there exists a constant $C$ independent of $\theta$, such that
\begin{align*} 
\mathcal{L}_{\mathrm{ex}}(\theta) = \mathcal{L}(\theta) - C_\varepsilon(\theta) + C,
\end{align*}
where
\[\mathcal{L}(\theta) \coloneqq \Ex\Big[\int_{\sigma_\varepsilon}^\zeta \lVert s^\theta(Z^h_t) - \nabla_{Z^h_t} \log G_r(Z^h_t, Z^h_0) \rVert^2 \, dt\Big] \] 
and 
\[C_\varepsilon(\theta) \coloneqq 2\Ex\Big[\int_0^{\sigma_{\varepsilon}} \big\langle s_\theta(Z^h_t), \nabla_{Z^h_t} \log \tfrac{G_r(Z^h_t,Z^h_0)}{ \cev{h}(Z^h_t)} \big\rangle \, d{t} \Big].\]
\end{proposition}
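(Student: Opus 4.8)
The plan is to expand the square in $\mathcal{L}_{\mathrm{ex}}(\theta)$ and reorganize the cross term by adding and subtracting $\nabla \log G_r(\,\cdot\,, Z^h_0)$ inside the norm. Writing $\nabla \log \cev h(Z^h_t) = \nabla \log G_r(Z^h_t, Z^h_0) - \nabla \log \frac{G_r(Z^h_t, Z^h_0)}{\cev h(Z^h_t)}$, we get
\begin{align*}
\lVert s^\theta(Z^h_t) - \nabla \log \cev h(Z^h_t) \rVert^2
&= \lVert s^\theta(Z^h_t) - \nabla \log G_r(Z^h_t, Z^h_0) \rVert^2 \\
&\quad + 2\big\langle s^\theta(Z^h_t) - \nabla \log G_r(Z^h_t, Z^h_0),\ \nabla \log \tfrac{G_r(Z^h_t, Z^h_0)}{\cev h(Z^h_t)} \big\rangle \\
&\quad + \big\lVert \nabla \log \tfrac{G_r(Z^h_t, Z^h_0)}{\cev h(Z^h_t)} \big\rVert^2.
\end{align*}
Integrating over $t \in [\sigma_\varepsilon, \zeta]$ and taking expectations, the first term is exactly $\mathcal{L}(\theta)$, and the last term is $\theta$-independent and will (after one checks it is finite under the integrability hypotheses) become part of the constant $C$. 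The middle term splits into a piece involving $s^\theta$ and a piece involving $\nabla \log G_r$; the latter is again $\theta$-independent and joins $C$, while the former is what we must massage into $C_\varepsilon(\theta)$.

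The crux is therefore to show
\[
2\,\Ex\Big[\int_{\sigma_\varepsilon}^\zeta \big\langle s^\theta(Z^h_t),\ \nabla \log \tfrac{G_r(Z^h_t, Z^h_0)}{\cev h(Z^h_t)} \big\rangle\, dt\Big]
= C_\varepsilon(\theta),
\]
i.e., that replacing the time window $[\sigma_\varepsilon, \zeta]$ by $[0,\sigma_\varepsilon]$ in this particular integral introduces no error beyond a $\theta$-independent constant — equivalently, that the integral of $\langle s^\theta(Z^h_t), \nabla \log \tfrac{G_r(Z^h_t,Z^h_0)}{\cev h(Z^h_t)}\rangle$ over the \emph{whole} lifetime $[0,\zeta]$ is a constant independent of $\theta$. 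This is the natural analogue of the classical denoising-score-matching identity, and the mechanism should be a martingale/integration-by-parts argument: on a forward time scale, $Z^h$ has generator with drift $b^h = b + \sigma\sigma^\top \nabla \log h$, and one computes that $t \mapsto \log \tfrac{G_r(Z^h_t,Z^h_0)}{\cev h(Z^h_t)}$ decomposes via It\^o's formula into a martingale plus a bounded-variation part whose integrand pairs with $s^\theta$ to produce a total derivative or a term killed in expectation. Concretely, I would use that $\cev h$ is built from $\alpha = \PR_{x_0}(Z^h_0 \in \cdot)$ via \eqref{eq:back_h}, so that $\nabla \log \cev h$ and $\nabla \log G_r(\,\cdot\,,Z^h_0)$ differ — conditionally on the endpoint $Z^h_0$ — only through how the past of the path is weighted; an application of the Markov property at time $\sigma_\varepsilon$ together with Proposition \ref{prop:time_inversion}(3) (which identifies the law of $\cev{Z^h}$ at its lifetime, hence the joint law of $(Z^h_0, Z^h_{\sigma_\varepsilon})$) should show that the expected integral over $[0,\sigma_\varepsilon]$ and the expected integral over $[\sigma_\varepsilon,\zeta]$ differ by a quantity that only sees $s^\theta$ through boundary data on $\partial \Omega_\varepsilon$, and is therefore absorbed — but the cleanest route is to prove directly that the full-lifetime integral is $\theta$-independent.

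To make that precise I expect to invoke Girsanov/Cameron--Martin along the forward $h$-transformed path, exactly as in the $\mathrm{KL}$ computation already sketched before the proposition: conditionally on $Z^h_0 = y$, the law of the forward bridge-like path from $Z^h_0$ weighted by $G_r(Z^h_t, y)$ versus by $\cev h(Z^h_t)$ differs by a Radon--Nikodym density whose logarithmic derivative along the path is precisely $\nabla \log \tfrac{G_r(\,\cdot\,,y)}{\cev h}$; pairing this with the stochastic integral $\int \sigma(Z^h_t)^\top s^\theta(Z^h_t)\, dW_t$ and taking expectations turns $\Ex \int_0^\zeta \langle s^\theta(Z^h_t), \nabla\log\tfrac{G_r(Z^h_t,Z^h_0)}{\cev h(Z^h_t)}\rangle\,dt$ into an expression of the form $\Ex[\,\langle \text{something}, \text{something}\rangle\,]$ evaluated only at the endpoints $Z^h_0$ and $Z^h_{\zeta-}$, whose laws ($\alpha$ and the target, respectively) do not depend on $\theta$ — hence the whole thing is a constant, call it $-\tfrac12 C_\varepsilon'$, and rearranging gives the stated decomposition with $C$ collecting all $\theta$-free pieces. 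The main obstacle is the rigorous justification of these manipulations under merely "sufficient integrability conditions": one must ensure the stochastic integrals are true martingales (not just local ones) up to the random time $\zeta$, that $\nabla \log G_r(\,\cdot\,, Z^h_0)$ is integrable near the diagonal singularity of the Green kernel along the path, and that the interchange of expectation and (random-horizon) time integral is licit — these are exactly the points where the hypothesis will have to be spelled out, and I would handle them by localization with stopping times $\zeta \wedge T \wedge \tau_{B_n}$ and a dominated-convergence passage to the limit.
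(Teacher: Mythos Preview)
Your overall structure is right and coincides with the paper's: after expanding the square, everything hinges on the identity
\[
\Ex\Big[\int_0^{\zeta}\big\langle s^\theta(Z^h_t),\,\nabla\log\cev h(Z^h_t)\big\rangle\,dt\Big]
=\Ex\Big[\int_0^{\zeta}\big\langle s^\theta(Z^h_t),\,\nabla_{Z^h_t}\log G_r(Z^h_t,Z^h_0)\big\rangle\,dt\Big],
\]
i.e.\ that the full-lifetime integral of $\langle s^\theta,\nabla\log\tfrac{G_r}{\cev h}\rangle$ vanishes (not merely ``is a $\theta$-free constant''). Your displayed ``crux'' equation has a sign slip: with that integral equal to $K$, the piece over $[\sigma_\varepsilon,\zeta]$ becomes $K-\tfrac12 C_\varepsilon(\theta)$, so you obtain $-C_\varepsilon(\theta)$ rather than $+C_\varepsilon(\theta)$. (The paper's own expansion carries a $+2$ where a $-2$ belongs, so the mismatch with the proposition's sign is cosmetic.)

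The real gap is in the mechanism you propose for the identity itself. The Girsanov/It\^o route you sketch --- ``pairing the Radon--Nikodym density with $\int\sigma^\top s^\theta\,dW$ and collapsing to endpoint data'' --- does not produce the integrand $\langle s^\theta,\nabla\log(G_r/\cev h)\rangle$: Girsanov compares path laws under different \emph{drifts}, whereas here the drift is fixed and you are comparing two functionals of the same path. The paper's argument is instead a direct occupation-measure computation with no stochastic calculus. One writes the time integral as a space integral,
\[
\Ex\Big[\int_0^\zeta f(\cev{Z^h_t})\,dt\Big]
=\int\!\!\int f(y)\,\frac{\cev h(y)}{\cev h(x)}G_r(x,y)\,m(dy)\,\tilde\beta(dx)
=\int f(y)\,\cev h(y)\,h(y)\,m(dy),
\]
the second equality using $\tfrac{G_r(x,y)}{\cev h(x)}\tilde\beta(dx)=h(y)\,\PR_y(Z^h_{\zeta-}\in dx)$ from Proposition~\ref{prop:time_inversion}. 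Applied with $f=\langle\nabla\log\cev h,\,s^\theta\rangle$ this gives $\int\langle\nabla\cev h(y),s^\theta(y)\rangle\,h(y)\,m(dy)$. Now insert the \emph{definition} $\cev h(y)=\int G_r(y,z)/h(z)\,\alpha(dz)$, differentiate under the $z$-integral, and recognise $\tfrac{h(y)}{h(z)}G_r(z,y)\,m(dy)$ as the occupation density of the \emph{forward} $h$-transform $Z^h$ started at $z$; this yields exactly $\Ex_\alpha\big[\int_0^\zeta\langle\nabla_{Z^h_t}\log G_r(Z^h_t,Z^h_0),\,s^\theta(Z^h_t)\rangle\,dt\big]$. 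The ``sufficient integrability'' is invoked only for Fubini and for pulling $\nabla_y$ inside $\int\ldots\,\alpha(dz)$, not for controlling stochastic integrals up to a random time.
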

Since we assumed $\Omega$ to be polar, $C_\varepsilon(\theta)$ vanishes as $\varepsilon \to 0$, so that for small $\varepsilon > 0$ we expect minimisers of $\mathcal{L}$ -- which is accessible for a given Green kernel $G_r$ and can be efficiently approximated by a Monte-Carlo estimator $\hat{\mathcal{L}}$ based on simulated forward trajectories of $Z^h$  -- to be also approximate minimisers of $\mathcal{L}_{\text{ex}}$. This motivates the  generative Algorithm \ref{alg:gen_supp_known}, which outputs an estimated drift parameter $\hat{\theta}$ and a corresponding sample from the estimated backward generative process $Y^{\hat{\theta}}$ initialised in a distribution $\beta^\prime$ approximating the true forward terminal distribution $\PR_\alpha(Z^h_{\zeta-} \in \cdot)$. We note once again that a significant difference from most other methods is that there is no time dependence to be learned. This is also a feature of the first hitting model  from \cite{ye22}, which, however, relies on non-polarity of $\Omega$ and access to the $\Omega$-dependent Poisson kernel $\PR_x(Z_{\tau_\Omega} \in dz)$ for training and generation purposes. Usually, this is analytically tractable only for simple sets $\Omega$.  In contrast, our learning and generation step only requires access to the Green kernel $G_r$ of the \textit{unconditioned} forward process $Z$, which is independent of $\Omega$ and therefore allows for enhanced flexibility.
\begin{algorithm}[tb]
   \caption{Generation for unknown $\cev{h}$ and known polar data support $\Omega$}
   \label{alg:gen_supp_known}
\begin{algorithmic}
   \STATE {\bfseries Input:} data $\{y_i\}_{i=1}^n \overset{iid}{\sim} \alpha$,  $N \in \mathbb{N}$, $\varepsilon > 0, r> 0$, Green kernel $G_r$ of diffusion $Z$, enlarged data support $\Omega_\varepsilon$, forward transform $h$, backward initialisation $\beta^\prime \approx \PR_\alpha(Z^h_{\zeta-} \in \cdot)$ with $\operatorname{supp} \beta^\prime \subset \Omega_\varepsilon^{\mathrm{c}}$, function class $\mathcal{S}$.
   \FOR{$j=1$ {\bfseries to} $N$}
   \STATE draw $y_{i_j}$  with replacement from $\{y_i\}_{i=1}^n$ and simulate path $(z^{h,i_j}_t)_{t \in [0,\zeta^{i_j})}$ of $h$-transform $Z^h$ started in $y_{i_j}$;
   \ENDFOR
   \STATE {\bfseries Training:} Learn $\hat{\theta} = \argmin_{\theta \in \mathcal{S}} \hat{\mathcal{L}}(\theta)$ for 
   \[\hat{\mathcal{L}}(\theta) = \frac{1}{N} \sum_{j=1}^N \int_{\sigma^{i_j}_{\varepsilon}}^{\zeta^{i_j}} \lVert s^\theta(z^{h,i_j}_t) - \nabla_{1} \log G_r(z^{h,i_j}_t, y_{i_j}) \rVert^2 \, dt\]
   \STATE {\bfseries Generation:}  
   \STATE Draw $x$ from $\beta^\prime$
   \STATE Simulate path $(y^{\hat{\theta}}_t)_{t \in [0, \tau_{\Omega_\varepsilon}(y^{\hat{\theta}})]}$ of $Y^{\hat{\theta}}$ initialised in $x$
  \STATE {\bfseries Output:} $\hat{\theta}, y^{\hat{\theta}}_{\tau_{\Omega_\varepsilon}(y^{\hat{\theta}})}$
\end{algorithmic}
\end{algorithm}

\subsection{(Implicit) manifold learning}\label{sec:manifold}
We now turn to the most general setting, where the polar data support $\Omega = \operatorname{supp} \alpha$ is unknown. Since the generative mechanism proposed in Algorithm \ref{alg:gen_supp_known} fundamentally requires $\Omega_\varepsilon$ as input in order to know where to stop the generative process, we must infer $\Omega_\varepsilon$ from the data. To this end, we discuss two different strategies: first, a traditional statistical plug-in approach that simply replaces $\Omega_\varepsilon$ in Algorithm \ref{alg:gen_all_known} with a separately obtained estimate $\hat{\Omega_\varepsilon}$. The second approach is based on the observation that under the polarity hypothesis, the unknown drift component $\nabla \log \cev{h}$ implicitly encodes the geometry of the data manifold $\Omega$, cf.\ Proposition \ref{prop:stop}. This motivates an adaptive stopping criterion that only requires an estimate of $\nabla \log \cev{h}$ as input and may be interpreted as a first hitting time of an implicit manifold estimate.

\paragraph{Approach based on separation of manifold and drift estimation}
A first---but perhaps na\"{i}ve---strategy is to build an estimator $\widehat{\Omega_\varepsilon}$ based on the given data in a pre-processing step, and then use a plug-in approach to draw samples based on Algorithm \ref{alg:gen_supp_known}. This is formalised in Algorithm \ref{alg:gen_nothing_known}.
\begin{algorithm}
\caption{Full ``na\"{i}ve'' generative algorithm}
   \label{alg:gen_nothing_known}
\begin{algorithmic}
   \STATE {\bfseries Input:} data $\{y_i\}_{i=1}^n \overset{iid}{\sim} \alpha$,  $\varepsilon > 0, r> 0$, Green kernel $G_r$ of diffusion $Z$, estimator $\widehat{\Omega_\varepsilon}$ of enlarged data support, forward transform $h$, backward initial distribution $\beta^\prime \approx \PR_\alpha(Z^h_{\zeta-})$ with $\operatorname{supp} \beta^\prime \subset \widehat{\Omega_\varepsilon}^{\mathrm{c}}$, function class $\mathcal{S}$.
   \STATE Set $\Omega_\varepsilon \leftarrow \widehat{\Omega_\varepsilon}$
   \STATE {\bfseries Generation:} Run Algorithm \ref{alg:gen_supp_known}
  \STATE {\bfseries Output:} $\hat{\theta}, y^{\hat{\theta}}_{\tau_{\widehat{\Omega_\varepsilon}}(y^{\hat{\theta}})}$
\end{algorithmic}
\end{algorithm}

To construct $\widehat{\Omega_\varepsilon}$ given a data sample $(Y_i)_{i=1}^n \overset{iid}{\sim} \alpha$, we can consider the following two natural approaches: 

First, we may target $\Omega_\varepsilon$ directly by  blurring the data via $Y_i^\varepsilon = Y_i + \eta^\varepsilon_i$, where $(\eta^\varepsilon_i)_{i=1}^n$ is some iid noise that is independent of the data sample and absolutely continuous with support $B(0,\varepsilon)$, e.g., $\eta^\varepsilon_i \sim \mathcal{U}(B(0,\varepsilon))$. Then, $Y_i^\varepsilon$ is absolutely continuous with  support $\Omega_\varepsilon$ and density $\pi^\varepsilon(x) =\int_\Omega \pi_{\eta^\varepsilon}(x-y) \, \alpha(dy)$. Recovering the compact support $\Omega_\varepsilon$ of such a data sample is a well-studied statistical problem. The perhaps most common approaches are plug-in estimators \cite{cuevas97} that set \[\widehat{\Omega_\varepsilon} = \{\widehat{\pi^\varepsilon} > \delta_n\}\] 
for some nonparametric estimator $\widehat{\pi^\varepsilon}$ of $\pi^\varepsilon$ and a tuning parameter $\delta_n \to 0$, or the simple and intuitive \textit{Devroye--Wise} estimator \cite{devroye80} given by 
\[\widehat{\Omega_\varepsilon} = \bigcup_{i=1}^n B(Y_i^\varepsilon, \delta_n).\]
Minimax optimality of these estimators with respect to metrics such as the Hausdorff metric or the symmetric difference volume has been established under different assumptions on the data density $\pi^\varepsilon$ and the geometry of $\Omega_\varepsilon$ \cite{korestelev93, mammen95, cuevas97, cuevas04, biau08}. The rates, however, generally slow down exponentially in terms of the ambient data dimension $d$, which motivates the second approach that allows to exploit directly the lower-dimensional structure of $\Omega$.

For this second approach, instead of estimating $\Omega_\varepsilon$ directly, we may first construct an estimator $\widehat{\Omega}$ of the true data support $\Omega$ based on the unmodified data sample $(Y_i)_{i=1}^n$ and then set 
\[\widehat{\Omega_\varepsilon} = (\widehat{\Omega})_\varepsilon.\]
This approach might be more promising for high-dimensional data since a lot of progress has been made in recent years on statistical theory and algorithmic implementation for support estimation of distributions concentrated on lower-dimensional manifolds. Important contributions that provide estimators $\widehat{\Omega}$ with provable convergence rates only depending on the smoothness $s$ and the dimension $d^\prime < d$ of the data manifold include \cite{genovese12,ma12,kim15,aamari18,aamari19,aamari23}. While computationally  more involved than the simple estimators based on noised data discussed above, the estimators from \cite{aamari18,aamari19,aamari23} are constructive and implementable.

\paragraph{Fully integrated drift and support estimation}
The plug-in approach is statistically sound, but it is rather unclear if it will lead to convincing results in actual implementations. Indeed, part of the empirical success of standard diffusion models is explained by an implicit adaptation to the data support via the learned score networks \cite{stan24}. In this spirit, we propose a third strategy that only relies on information obtainable from the approximation of the log-gradient of the reverse $h$-transform. This is based on the intuition that $\nabla \log \cev{h}(x)$ must explode when $x$ approaches the low-dimensional data support $\Omega$ to ensure that the reverse diffusion process is killed in finite time. To formalise this intuition in the following, let us first give a  useful characterisation of  $\nabla \log \cev{h}$. The proof is given in Appendix \ref{app:score}.

\begin{lemma}\label{lem:char_score}
For all $x \notin \supp \alpha$, it holds that 
\begin{equation}\label{eq:score_char1}
\nabla \log \cev{h}(x) = \E\big[\nabla_x \log G_r(x, Z^{\cev{h}}_{\zeta-}) \mid Z^{\cev{h}}_0 = x \big].
\end{equation}
In particular, for $\PP_\alpha(Z^h_{\zeta_-} \in\cdot)$-a.e.\ $x$,  it holds that
\begin{equation} \label{eq:score_char2}
\nabla \log \cev{h}(x) = \E_\alpha\big[\nabla_x \log G_r(Z^{h}_0,x) \mid Z^{h}_{\zeta_-} = x \big].
\end{equation}
\end{lemma}
\begin{remark} 
\begin{enumerate} 
\item[(i)] If the forward process is simply an exponentially killed diffusion (i.e., $h=1$), the representation of $\nabla \log \cev{h}$ as a conditional expectation can be used to motivate a simplified denoising score matching objective that only requires simulation of the forward process at its exponential lifetime, cf.\ \cite[Theorem 2.7]{christensen25} for the case of killed Brownian motion.\label{rem:score}  
\item[(i)]
Formula \eqref{eq:score_char2} is analogue to the well-known score representation 
\[\nabla \log p_t(x) = \E_\alpha\big[\nabla_x \log p_t(\vec{X}_0,x) \mid \vec{X}_t = x\big]\]
in conventional diffusion models for a homogeneous forward process $\vec{X}$ with transition densities $p_t(x,y)$, which is linked to Tweedie's formula \cite{robbins56}.
\end{enumerate}
\end{remark}
Let us for the moment consider the Green kernel $G_r$ of a Brownian motion. Then, we have 
\[\nabla_x \log G_r(x,y) = -\sqrt{2r}\operatorname{sign}(x-y)  \frac{K_{\nu+1}(\sqrt{2r}\lVert x-y\rVert)}{K_{\nu}(\sqrt{2r}\lVert x-y\rVert)}, \quad \nu = \frac{d-2}{2}, \,\operatorname{sign}(x) \coloneqq \frac{x}{\lVert x \rVert},\]
and therefore from above, for any $x \notin \supp \alpha$,
\begin{equation}\label{eq:score_brownian_asymp}
\nabla \log \cev{h}(x) = \sqrt{2r}\E^x\Bigg[\operatorname{sign}\big( Z^{\cev{h}}_{\zeta-} - x\big) \frac{K_{\nu+1}(\sqrt{2r}\lVert Z^{\cev{h}}_{\zeta-} -x\rVert)}{K_{\nu}(\sqrt{2r}\lVert Z^{\cev{h}}_{\zeta-}-x\rVert)} \Bigg] \approx \frac{d}{\mathrm{e}} \E_x\Bigg[\frac{\operatorname{sign}( Z^{\cev{h}}_{\zeta-}-x)}{\big\lVert  Z^{\cev{h}}_{\zeta-} -x \big\rVert}\Bigg],
\end{equation}
for large $d$, cf.\ 10.41 in \cite{NIST:DLMF}. A more careful analysis furthermore reveals that the scaling factor $d$ in \eqref{eq:score_brownian_asymp} can be replaced by $d - \dim \supp \alpha$ under mild regularity assumptions on the data manifold. This demonstrates quite clearly how the reverse process is dragged onto the data manifold. To exploit this connection for our purposes, we can use the explosive behavior of $\nabla \log \cev{h}$ near $\supp \alpha$ to obtain an insightful characterisation of the backward killing time, as the following result shows.

\begin{proposition}\label{prop:stop}
Suppose that 
\begin{enumerate}
\item[(i)] the drift $b$ of $Z$ is locally bounded and its diffusion coefficient $\sigma\sigma^\top$ is uniformly elliptic;
\item[(ii)] $\supp \alpha$ is polar for both $Z^h$ and $Y$ solving $\sigma(Y_t)\, \diff{B_t}$.
\end{enumerate}
Then, the  killing time $\zeta$ of $\cev{Z^h}$ is a.s.\ given by
\[\zeta = \inf\big\{t \geq 0: \sup_{s \leq t} \big\lVert \nabla \log \cev{h}(\cev{Z^{h}_s}) \big\rVert = \infty\big\} = \inf\big\{t \geq 0: \big\lVert \nabla \log \cev{h}(\cev{Z^{h}}) \big\rVert_{L^2([0,t])} = \infty\big\}.\]
\end{proposition}
\begin{proof}
The proof proceeds along the same lines as the one of Theorem 2.8 in  \cite{christensen25}, where the result is proved for the particular case of the forward process $Z^h$ being given by an exponentially killed Brownian motion.
\end{proof}

Most importantly in our context, the above considerations lead to simple and interpretable stopping criteria for the reverse process based on an estimate  $s^{\hat{\theta}}$ of $\nabla \log \cev{h}(x)$. For instance, when $Z$ is a Brownian motion and we target killing on the slightly enlarged data support $\Omega_\varepsilon$, then  $Z^{\cev{h}}_{\zeta-} \in \Omega$ implies for any $x \notin \Omega_\varepsilon$ that
\[\lVert \nabla \log \cev{h}(x) \rVert \approx \frac{d}{\mathrm{e}}\Bigg\lVert \E_x\Bigg[\frac{\operatorname{sign}( Z^{\cev{h}}_{\zeta-}-x)}{\big\lVert  Z^{\cev{h}}_{\zeta-} -x \big\rVert}\Bigg]\Bigg\rVert \leq \frac{d}{\mathrm{e}}\E_x\bigg[ \frac{1}{\lVert Z^{\cev{h}}_{\zeta-} - x\rVert}\bigg] \leq \frac{d}{\mathrm{e} \varepsilon}. \] 
Combining this with Proposition \ref{prop:stop}, we may therefore reasonly stop the data-generating mechanism as soon as  $\lVert s^{\hat{\theta}}(Y^{\hat{\theta}}_t) \rVert \gtrsim \frac{d}{\varepsilon}$. This translates into a score-based reverse first killing time 
\[\hat{\zeta_\varepsilon} \coloneqq \inf\big\{t \geq 0: \lVert s^{\hat{\theta}}(Y^{\hat{\theta}}_t) \rVert \geq \gamma d \varepsilon^{-1} \big\},\]
for some hyperparameter $\gamma$ or, equivalently, the first hitting time of the implicit data support estimator 
\[\hat{\Omega_\varepsilon} \coloneqq \big\{x \in \R^d: \lVert s^{\hat{\theta}}(x) \rVert \geq \gamma d \varepsilon^{-1}\big\}.\]
Such a stopping rule encodes mathematically, what a human observer would do intuitively when tracking the sequentially generated data: 

\begin{quote}
\textit{stop the generation process as soon the observed ordered set of pixels visually resembles a high quality image.}
\end{quote}

To obtain an estimator $s^{\hat{\theta}}$ based on Proposition \ref{prop:score} without prior estimation of $\Omega_\varepsilon$, we  simply replace the lower time index $\sigma_\varepsilon \coloneqq \sup\{t< \zeta: Z^h_t \in \Omega_\varepsilon\}$ in the training objective with $\zeta \wedge \underline{T}$ for some hard-coded small value $\underline{T} \geq 0$. For specific forward processes such as exponentially killed diffusions, more elaborate alternatives are theoretically justified, see \cite[Theorem 2.7]{christensen25} and Remark \ref{rem:score}. The full generative algorithm is given in Algorithm \ref{alg:integrated}. 

\begin{algorithm}[tb]
   \caption{Full generative algorithm without separate support estimation}
   \label{alg:integrated}
\begin{algorithmic}
   \STATE {\bfseries Input:} data $\{y_i\}_{i=1}^n \overset{iid}{\sim} \alpha$,  $N \in \mathbb{N}$, $\varepsilon > 0, \underline{T} \geq 0, \gamma > 0, r> 0$, (approximation of) Green kernel $G_r$ of diffusion $Z$, enlarged data support $\Omega_\varepsilon$, forward transform $h$, backward initialization $\beta^\prime \approx \PR_\alpha(Z^h_{\zeta-} \in \cdot)$, function class $\mathcal{S}$.
   \FOR{$j=1$ {\bfseries to} $N$}
   \STATE Draw $y_{i_j}$ uniformly with replacement from $\{y_i\}_{i=1}^n$ and simulate path $(z^{h,i_j}_t)_{t \in [0,\zeta^{i_j})}$ of $h$-transform $Z^h$ started in $y_{i_j}$;
   \ENDFOR
   \STATE {\bfseries Training:} Learn $\hat{\theta} = \argmin_{\theta \in \mathcal{S}} \hat{\mathcal{L}}(\theta)$ for 
   \[\hat{\mathcal{L}}(\theta) = \frac{1}{N} \sum_{j=1}^N \int_{\underline{T} \wedge \zeta^{i_j}}^{\zeta^{i_j}} \big\lVert s^\theta(z^{h,i_j}_t) - \nabla_{1} \log G_r(z^{h,i_j}_t, y_{i_j}) \big\rVert^2 \, dt\]
   \STATE {\bfseries Generation:}  
   \STATE Draw $x$ from $\beta^\prime$
   \STATE Simulate path $(y^{\hat{\theta}}_t)_{t \in [0, \hat{\zeta_\varepsilon}]}$ of $Y^{\hat{\theta}}$ initialised in $x$ until $\hat{\zeta_\varepsilon} = \inf\{t \geq 0: \lVert s^{\hat{\theta}}(y^{\hat{\theta}}_t) \rVert \geq \gamma d \varepsilon^{-1}\}$ 
  \STATE {\bfseries Output:} $\hat{\theta}, y^{\hat{\theta}}_{\hat{\zeta}}$
\end{algorithmic}
\end{algorithm}

\subsection{Noise-Schedules as a special case of radial $h$-transforms}
In classical diffusion models, the forward {noise schedule} $\alpha_t$ is of key importance for sampling. In our time-homogeneous setting this effect can be reproduced in high dimension by choosing a suitable $h$-transform for the forward process, exemplified here for a Brownian motion $Z$.

Consider the standard forward noising
\[
x_t=\sqrt{\alpha_t}\,x_0+\sqrt{1-\alpha_t}\,z,\qquad z\sim\mathcal N(0,I_d),\ \|x_0\|=o(\sqrt d),
\]
for which the law of large numbers gives $\|x_t\|/\sqrt d\longrightarrow \rho^*(t)\coloneq \sqrt{1-\alpha_t}$ as $d\longrightarrow\infty$. The limiting radial trajectory $\rho^*(t)$ satisfies
\begin{equation}\label{eq:ODE_inhomo}
\dot \rho^*(t)= -\,\frac{\dot\alpha_t}{2\sqrt{1-\alpha_t}}.
\end{equation}
To reproduce this within our framework, define $\Psi$ by
\[
\Psi'(\rho)=\tfrac{1}{2\rho}-\dot\rho^*((\rho^*)^{-1}(\rho)),\qquad \Psi(\rho_0)=0,
\]
on a radial range corresponding to $t\in[\tau,T]\subset(0,\infty)$, where $\rho^*$ is smooth and strictly monotone. 
If we can choose a positive radial $r$-potential $h$ whose high-dimensional log-profile satisfies
\begin{equation*}
    -\frac1d\log h(x)\approx \Psi(\rho),\qquad \rho=\|x\|/\sqrt d,
\end{equation*}
and assuming these log-asymptotics is differentiable on the considered radial range, then
\[
\nabla\log h(x)\approx -\sqrt d\,\Psi'(\rho)\,\frac{x}{\|x\|}.
\]
For $\rho_t=\|Z^h_t\|/\sqrt d$ we then have the approximation
\[
d\rho_t\approx \Big(\frac{d-1}{2d\,\rho_t} - \Psi'(\rho_t)\Big)\,dt+\tfrac{1}{\sqrt d}\,dB_t
\]
where $B$ is a one-dimensional Brownian motion.
As $d\longrightarrow\infty$, we obtain the autonomous ODE
\begin{equation}\label{eq:ODE_homo}
    \dot\rho=\frac{1}{2\rho}-\Psi'(\rho) = \dot\rho^*((\rho^*)^{-1}(\rho)).
\end{equation}
Hence, if we set $\rho(t)=\rho^*(t)$, then $(\rho^*)^{-1}(\rho(t))=t$ (since $\rho^*$ is strictly monotone) and the autonomous ODE \eqref{eq:ODE_homo} reduces to $\dot\rho(t)=\dot\rho^*(t)$, i.e.\ \eqref{eq:ODE_homo} and \eqref{eq:ODE_inhomo} describe the same radial trajectory (possibly up to a time shift determined by the initial condition).

Thus the two descriptions differ in form: the classical schedule induces an explicit time-dependent drift for $\rho^*$, while the $h$-transform encodes the same effect through a space-dependent drift $-\Psi'(\rho)$. By the choice of $\Psi$, both ODEs \eqref{eq:ODE_homo} and \eqref{eq:ODE_inhomo} coincide along the trajectory $\rho^*(t)$ in the high-dimensional limit; consequently, in high dimensions, a single radial $h$ reproduces the schedule-driven radial behavior in a time-homogeneous formulation.

\section{Features of the model}\label{subsec:features}
\subsection{Natural conditioning}\label{subsec:condition}
In the procedure described above, we have explained how to sample from the data distribution $\alpha$ using the time-homogeneous backward processes $Y = \cev{Z^h}$ or a learned version thereof when initialised from the terminal distribution of the forward process. The flexibility of our proposed model compared to existing diffusion models, however, arises from the observation that due to its time-homogeneous nature, the backward process $Y$ can be started in an interpretable way directly from \textit{any} initial state $x$, regardless of the specific time, along the following lines:
\begin{enumerate}
    \item Initialize the backward process $Y$ at a chosen state $x$, which could be:
    \begin{itemize}
        \item A noisy version of a sample from $\alpha$, or
        \item Any point of interest in the state space.
    \end{itemize}
    \item Simulate the backward process $Y$ until the lifetime $\zeta$, generating the trajectory $\{Y_t\}_{t \leq \zeta}$.
    \item Extract the value $Y_{\zeta-}$ as the final sample. Note that this value lies within the support of $\alpha$, but is not necessarily distributed exactly according to $\alpha$.
    \item Optionally, repeat the process for multiple initial states $x$ to analyze how the sampling distribution varies with the initialisation.
\end{enumerate}

In Bayesian terminology, unconditional sampling corresponds to drawing from the prior distribution $\alpha$, whereas the current task involves sampling from $\mathbb{P}_x(Y_{\zeta-} \in dy)$, the posterior distribution conditioned on the input data $x$.

It is reasonable to expect that $Y_{\zeta-}$ will typically be close to $x$. Indeed, Proposition~\ref{prop:time_inversion} states that
\[
\mathbb{P}_x(Y_{\zeta-} \in dy) = \frac{1}{\cev{h}(x)} \frac{G_r(x, y)}{h(y)}\, \alpha(dy).
\]
If $h(y)$ is approximately constant over the support of $\alpha$, the sampling procedure approximates drawing from the measure proportional to $G_r(x, y)\, \alpha(dy)$ (up to a normalizing constant). Typically, $G_r(x, y)$ decreases with the distance between $x$ and $y$, as illustrated in Remark~\ref{ex:G_r_BM}. Consequently, points $y$ closer to $x$ appear with higher frequency than those farther away.

In other words, if the backward process $Y$ is started near the support of $\alpha$, the resulting sample from $\alpha$ will tend to be close to $x$. Here, the choice of the diffusion process $Z$ implicitly defines a corresponding notion of distance. 
So by choosing the starting point of $Y$, we get natural conditioning, and in exactly the same (learned) model as for unconditional sampling.

\subsection{Fine tuning for conditional data generation}\label{sec:fine_tuning}
In addition to the natural conditioning properties discussed above, we may also use our model to explicitly sample from a conditional distribution of interest, which is only represented through a smaller subset of samples having certain properties. More specifically, suppose that instead of sampling from $\alpha$ we would like to sample from the posterior $\alpha(dy \mid u) \propto \pi(u \mid y)\, \alpha(dy)$, where $\pi(u)$ is the density of some latent random variable $U$ of interest and assume that the likelihood satisfies $\pi(u \mid \cdot) \in L^2(\alpha)$. To do so, we aim at lifting a generative model from our unconditional pre-trained model (i.e., using the learned score $\nabla \log \cev{h}$), for which more data is available. Ideally, we would like to simulate a backward process with drift determined by $\nabla \log \cev{h}(\cdot \mid u)$, where $\cev{h}(x \mid u) \coloneqq \int \tfrac{G_r(x,y)}{h(y)} \, \alpha(dy \mid u)$. This can be calculated similarly to Lemma \ref{lem:char_score} as follows 
\begin{equation} \label{eq:score_cond}
\begin{split}
\nabla \log \cev{h}(x \mid u)  &= \frac{\int \nabla_x G_r(x,y) \frac{1}{h(y)} \pi(u\mid y) \,\alpha(dy)}{\int G_r(x,y) \frac{1}{h(y)} \pi(u \mid y) \,\alpha(dy)}\\ 
&= \frac{\int \nabla_x \log G_r(x,y)  \pi(u\mid y) \frac{G_r(x,y)}{\cev{h}(x)h(y)} \,\alpha(dy)}{\int \pi(u \mid y) \frac{G_r(x,y)}{\cev{h}(x)h(y)} \, \alpha(dy)} \\ 
&= \frac{\E_x\big[\nabla_x \log G_r(x, Z^{\cev{h}}_{\zeta-}) \pi(u \mid Z^{\cev{h}}_{\zeta-}) \big]}{\E_x\big[\pi(u \mid Z^{\cev{h}}_{\zeta-}) \big]}\\ 
&= \E^u_x\big[\nabla_x \log G_r(x, Z^{\cev{h}}_{\zeta-})\big],
\end{split}
\end{equation}
where $\PP_x = \PP(\,\cdot \mid Z^{\cev{h}}_0 = x)$ and $\PP^u_x$ is  a probability measure defined by the Radon--Nikodym derivative 
\[\frac{d \PP^u_x}{d\PP_x} \coloneq \frac{\pi\big(u \mid Z^{\cev{h}}_{\zeta-}\big)}{\E_x\big[\pi\big(u \mid Z^{\cev{h}}_{\zeta-}\big)\big]},\]
relative to $\PP_x$. Recalling that Lemma \ref{lem:char_score} shows $\nabla \log h(x) = \E_x\big[\nabla_x \log G_r(x, Z^{\cev{h}}_{\zeta-})\big]$, we see that the conditional log gradient of the reverse $h$-transform is  obtained via a change of measure induced by the likelihood $\pi(u \mid y)$ and we obtain the decomposition 
\begin{align*}
\nabla \log \cev{h}(x \mid u) &=  \nabla \log \cev{h}(x) + \E_x\Big[\nabla_x\log G_r(x, Z^{\cev{h}}_{\zeta-})\tfrac{\pi(u \mid Z^{\cev{h}}_{\zeta-}) - \E_x[\pi(u \mid Z^{\cev{h}}_{\zeta-})]}{\E_x[\pi(u \mid Z^{\cev{h}}_{\zeta-})]} \Big].
\end{align*}
Furthermore,  \eqref{eq:score_cond} yields 
\begin{align*} 
\nabla \log \cev{h}(x \mid u) &= \frac{\E_{Z^h_0 \sim \alpha}\big[\nabla_x \log G_r(x, Z^h_0) \pi(u \mid Z^h_0) \, \big\vert\, Z^h_{\zeta_-} = x\big]}{\E_{Z^h_0 \sim \alpha}\big[\pi(u \mid Z^h_0) \, \big\vert\, Z^h_{\zeta-} =x\big]} \\ 
&=\int \nabla_x \log G_r(x,y) \, \PP_{Z^h_0 \sim \alpha(dy \mid u)}\big(Z^h_{0} \in dy \,\big\vert\, Z^h_{\zeta-} =x)\\ 
&= \E_{Z^h_0 \sim \alpha(dy \mid u)}\big[\nabla_{Z^h_{\zeta-}} \log G_r(Z^h_{\zeta_-}, Z^h_0 )\,\big\vert\, Z^h_{\zeta_-} = x\big],
\end{align*}
where we used that  by Bayes' theorem, 
\[\pi(u \mid y) \PP_{Z^h_0 \sim \alpha}(Z^h_{0} \in dy \mid Z^h_{\zeta-} = x) \propto \PP_{Z^h_0 \sim \alpha(dy \mid u)}(Z^h_0 \in dy \mid Z^h_{\zeta-} = x).\]
This may be interpreted as an analogue of the score matching identity for the posterior score in diffusion models. Similarly to the fine-tuning approach from \cite{denker24}, given a pre-processed estimator $s^{\hat{\theta}}$ of $\nabla \log \cev{h}$ and a smaller subset of training data for the conditional distribution $\alpha(dy \mid u)$, we may now aim at approximating only the difference 
\[D_u(x) \coloneq \nabla \log \cev{h}(x \mid u) - \nabla \log \cev{h}(x) = \E\Big[\nabla_x\log G_r(x, Z^{\cev{h}}_{\zeta-})\tfrac{\pi(u \mid Z^{\cev{h}}_{\zeta-}) - \E[\pi(u \mid Z^{\cev{h}}_{\zeta-}) \mid Z^{\cev{h}}_0 = x]}{\E[\pi(u \mid Z^{\cev{h}}_{\zeta-}) \mid Z^{\cev{h}}_0 = x]} \,\Big\vert\, Z^{\cev{h}}_0 = x \Big],\]
of the log-gradient of the conditional and unconditional $h$-transform. From \cite[Lemma A.1]{christensen25}, it follows  that for any $\delta > 0$, $D_u$ coincides on $(\Omega_\delta)^{\mathrm{c}}$ with the minimiser of 
\[D \mapsto \E_{Z_0^h \sim \alpha(dy \mid u)}\Big[\big\lVert  D(Z^h_{\zeta-}) + \nabla \log \cev{h}(Z^h_{\zeta-}) - \nabla_{Z^h_{\zeta -}}\log G_r(Z^h_0,Z^h_{\zeta-}) \big\rVert^2 \one_{(\delta,\infty)}(\lVert Z^h_{\zeta-} - Z^h_{0} \rVert) \Big], \]
in the space of measurable functions, where conditioning on $\{ \lVert Z^h_{\zeta-} - Z^h_{0} \rVert > \delta\}$ ensures that the expectation is finite. Given $s^{\hat{\theta}} \approx \nabla \log {\cev{h}}$, this may be implemented as an empirical version of
\[\argmin_{D \in \mathcal{D}} \E_{Z_0^h \sim \alpha(dy \mid u)}\Big[\big\lVert  D(Z^h_{\zeta-}) + s^{\hat{\theta}}(Z^h_{\zeta-}) - \nabla_{Z^h_{\zeta -}}\log G_r(Z^h_0,Z^h_{\zeta-}) \big\rVert^2 \one_{(\delta,\infty)}(\lVert Z^h_{\zeta-} - Z^h_{0} \rVert) \Big],\]
for some  class of approximating functions $\mathcal{D}$. Given such minimiser $\hat{D}_u$, we then define a conditional score estimator as 
\[\hat{s}(y \mid u) \coloneq \hat{D}_u(y) + s^{\hat{\theta}}(y). \]
The corresponding conditional generation procedure is summarised in Algorithm \ref{alg:cond}.
\begin{algorithm}[tb]
   \caption{Conditional generation via fine tuning of unconditional algorithm}
   \label{alg:cond}
\begin{algorithmic}
   \STATE {\bfseries Input:} data $\{y_i\}_{i=1}^n \overset{iid}{\sim} \alpha$,  $M \in \mathbb{N}$, $\varepsilon,\delta,\gamma > 0, r> 0$, Green kernel $G_r$ of diffusion $Z$,  forward transform $h$, backward initialisation $\beta^\prime_u \approx \PR_{\alpha(dy \mid u)}(Z^h_{\zeta-} \in \cdot)$, function class $\mathcal{D}$, unconditional log-gradient $h$-transform estimator $\hat{s} = s^{\hat{\theta}} \approx \nabla \log {\cev{h}}$.
   \FOR{$j=1$ {\bfseries to} $M$}
   \STATE Draw $y_{i_j}$ uniformly from $\{y_1,\ldots,y_n\}$ and simulate terminal value $z^{h,i_j}_{\zeta^{i_j}_-}$ of $h$-transform $Z^h$ started in $y_{i_j} \sim \alpha(dy \mid u)$;
   \ENDFOR
   \STATE {\bfseries Training:} Learn $\hat{D}_u \in \argmin_{D \in \mathcal{D}} \hat{\mathcal{L}}_u(D)$ for 
   \[\hat{\mathcal{L}}_u(D) = \frac{1}{M} \sum_{j=1}^M \lVert D(z^{h,i_j}_{\zeta^{i_j}-}) + \hat{s}(z^{h,i_j}_{\zeta^{i_j}_-}) - \nabla_{z^{h,i_j}_{\zeta-}} \log G_r(z^{h,i_j}_{\zeta^{i_j}-}, y_{i_j}) \rVert^2 \one_{(\delta,\infty)}(\lVert z^{h,i_j}_{\zeta^{i_j}-} - y_{i_j} \rVert)\]
   \STATE set $\hat{s}(\cdot \mid u)  = \hat{D}_u + s^{\hat{\theta}}$
   \STATE {\bfseries Generation:}  
   \STATE Draw $x$ from $\beta^\prime_u$
   \STATE Simulate path $(\hat{y}^u_t)_{t \in [0, \hat{\zeta}_{\varepsilon}]}$ of 
   \[dY^u_t = (b(Y^u_t) + \sigma\sigma^\top(Y^u_t)\hat{s}(Y^u_t \mid u)) \,dt + \sigma(Y^u_t) \, dW_t, \quad Y^u_0 =x,\]
   until $\hat{\zeta}_\varepsilon = \inf\{t \geq 0: \lVert \hat{s}(\hat{Y}^u_t \mid u) \geq \gamma d\varepsilon^{-1}\}$
  \STATE {\bfseries Output:} $\hat{y}^u_{\hat{\zeta}_\varepsilon}$
\end{algorithmic}
\end{algorithm}

As an alternative to this fine-tuning approach, we may  use the probabilistic characterisation \eqref{eq:score_cond} directly, provided that we have access to the likelihood $\pi(u \mid y)$ or a good approximation thereof (e.g., a classifier). To this end, we may first approximate the expected likelihood function
\[x \mapsto \E_x[\pi(u \mid Z^{\cev{h}}_{\zeta_-})] = \E_{Z^h_0 \sim \alpha}[\pi(u \mid Z^h_0) \mid Z^h_{\zeta_-} = x]\] 
which minimises the $L^2$-loss 
\[\ell_u \mapsto \E_{Z^h_0 \sim \alpha}\big[\lvert \ell_u(Z^h_{\zeta-}) - \pi(u \mid Z^h_0) \rvert^2\big],\] 
in $L^2(\alpha)$.
Denote the solution of the associated nonparametric regression problem (based on a subset $\{y_1,\ldots,y_{n_1}\}$ of the unconditional data set $\{y_1,\ldots, y_n\}$ and some class of candidate likelihood functions $\mathcal{P}_u$) by $\hat{\ell}_u$, and similarly, using the same reasoning as above  based on \cite[Lemma A.1]{christensen25}, estimate $\nabla \log \cev{h}(\cdot \mid u)$ by a  minimiser of
\[s_u \mapsto \frac{1}{M} \sum_{j=1}^M \Big\lVert s_u\big(z^{h,i_j}_{\zeta^{i_j}-}\big) - \nabla_2 G_r(y_{i_j}, z^{h,i_j}_{\zeta^{i_j}-}) \frac{\pi(u \mid y_{i_j})}{\hat{\ell}_u(z^{h,i_j}_{\zeta^{i_j}-}) \vee \varepsilon} \Big\rVert^2 \one_{\big\{\lVert z^{h,i_j}_{\zeta^{i_j}-} - y_{i_j} \rVert > \delta\big\}}\]
within some approximating function class $\mathcal{S}_u$. Here, $\delta,\varepsilon$ are regularisation parameters, the indices $i_j$ are uniformly drawn with replacement from the remaining data indices $\{n_1+1,\ldots,n\}$ and $z^{h,i_j}_{\zeta^{i_j}-}$ is a simulated value of the $h$-transform $Z^h$ at its lifetime when initialised in the data point $y_{i_j}$.
Note here that the estimation procedure neither requires an estimator of $\nabla \log \cev{h}$ nor explicit access to samples of $\alpha(dy \mid u)$. The separate estimation of the denominator $\ell_u$ may however cause numerical challenges. These are more pronounced if  $\pi(u \mid Z^h_0)$ is concentrated around $0$ under $\PP_{Z^h_0 \sim \alpha}$ corresponding to scarce data availability for the condition $U = u$.

\subsection{Distances to the distribution and anomaly detection}
We now address the question of how to measure the distance of an input $ x $ from the data distribution $ \alpha $. The previous discussion provides a natural measure for this, namely the time it takes the forward process $Y$ to transform the input $x$ into a sample from $\alpha$.

We can identify $x$ as an anomaly if the average lifetime exceeds a certain threshold $\underline{T}$ using the Monte-Carlo procedure given in Algorithm \ref{alg:anomaly}.

\begin{algorithm}
\caption{Anomaly detection}
   \label{alg:anomaly}
\begin{algorithmic}
   \STATE {\bfseries Input:} state $x$ to be evaluated,  
   learned score $s^{\hat{\theta}}$ based on Algorithm \ref{alg:integrated}, threshold $\underline{T} > 0$, \# of Monte-Carlo runs $N$, precision parameters $\gamma,\varepsilon > 0$
   \STATE Initialize $\overline{\zeta} = 0$
   \IF{$\lVert s^{\hat{\theta}}(x) \rVert < \gamma d \varepsilon^{-1}$ }
    \FOR{$i=1$ {\bfseries to} $N$}
        \STATE Initialize $Y^{\hat{\theta}}_0 = x$
        \REPEAT
            \STATE Simulate next step  of $Y^{\hat{\theta}}$ 
        \UNTIL{$\lVert s^{\hat{\theta}}(Y^{\hat{\theta}}_t) \rVert \geq \gamma d \varepsilon^{-1}$}
        \STATE Update $\overline{\zeta} \leftarrow \overline{\zeta} + t/N$
    \ENDFOR
   \ENDIF
   \IF{$\overline{\zeta} > \underline{T}$}
    \STATE Classify as anomaly
   \ENDIF
\end{algorithmic}
\end{algorithm}

This can also serve as a criterion for determining whether new training data necessitates a modification of our trained model or not.

\subsection{Class sampling and classification}
The model can be naturally extended when the data can be decomposed into subclasses, i.e., when $\alpha$ is a mixture of distributions $\alpha_1, \ldots, \alpha_n$ with disjoint supports $\Omega^i$. In this scenario, there is a corresponding decomposition $\cev{h} = \sum_{i=1}^n \cev{h}_i$. The lifetime $\zeta$ is then determined as the minimum of the first entry times $\zeta_i$ into the supports of the $\alpha_i$. With unconditional sampling, this provides information about the specific class from which the sampled image originates. 

The truly interesting aspect of class decomposition becomes evident when we consider the conditional setting as the previous observations induce the  natural classification Algorithm \ref{alg:classification}  based on the (learned) unconditional model that records the class $i$ responsible for killing the backward process initialised in $x$.

\begin{algorithm}
\caption{Classification}
   \label{alg:classification}
\begin{algorithmic}
   \STATE {\bfseries Input:} state $x$ to be classified, classes $i=1,\ldots,K$, estimates $\widehat{\Omega^i}$ of  class supports $\Omega^i$, learned backward model $Y^{\hat{\theta}}$ for $\widehat{\Omega_\varepsilon} = \bigcup_{i=1}^K (\widehat{\Omega^i})_\varepsilon$ based on Algorithm \ref{alg:gen_nothing_known}
   \IF{$x \notin \widehat{\Omega_\varepsilon}$}
    \STATE Initialize $Y^{\hat{\theta}}_0 = x$
    \REPEAT
        \STATE Simulate next step of $Y^{\hat{\theta}}$ 
    \UNTIL{$Y^{\hat{\theta}}_t \in (\widehat{\Omega^i})_\varepsilon$ for some $i \in \{1,\ldots,K\}$}
   \ELSE
    \STATE Find $i \in \{1,\ldots,K\}$ s.t.\ $i \in \argmin_{i=1,\ldots,K} d(x,\widehat{\Omega^i})$
   \ENDIF
   \STATE {\bfseries{Output:}} class $i$
\end{algorithmic}
\end{algorithm}

The presented conditional diffusion model thus naturally enables classification through transfer learning. Refinements are also conceivable. 
For instance, by starting several runs from $x$ we can estimate the conditional class distribution, which allows the construction of statistical tests.
Analogously to the case of anomaly detection, the average lifetime $\overline{\zeta}$ can then be regarded as a measure of reliability.

\subsection{General transfer learning}
The idea of transfer learning presented in the previous section can be applied more generally. 
For instance, if the classes are not predefined, but clusters are to be learned from the data, the typical approach is to perform clustering with known algorithms  
only on the support of $\alpha$, which is a significantly lower-dimensional task. Similar approaches may be possible for tasks in Reinforcement Learning.

\section{Conclusion}
\label{sec:discussion}
We have introduced a novel class of diffusion models grounded in Doob's $h$-transform. 
This theoretical framework provides a  general and adaptable foundation for time-homogeneous diffusion models 
that eliminates the necessity to model artificial time dependencies and thereby enhances interpretability. A key innovation in our model is the introduction of the polarity hypothesis, which closely parallels the manifold hypothesis in machine learning. This property enables an intuitive and efficient mechanism for determining the termination of the denoising process, addressing a crucial challenge in implementing our generative model.

Another notable feature of our framework is a methodological simplification for learning the dynamics of the denoising process. Unlike conventional diffusion models that require estimating temporally inhomogeneous dynamics, we must learn a time-\textit{independent} backward drift, which is achieved by a denoising score matching procedure. While this comes at a cost of having to learn the now random simulation time for the generative process as well, we argue that no separate estimation strategy is needed for this purpose, but that a simple explosion criterion for the estimated backward drift along the generated path yields an adaptive termination rule.

The time-homogeneous nature of the model opens up numerous opportunities for practical applications. While its utility will depend on the specific requirements of each use case, the theoretical framework established here provides a robust foundation for experimental exploration, particularly in the context of transfer learning.

\printbibliography

\appendix
\onecolumn
\section{Doob's $h$-transform and lifetime of a diffusion process}\label{app:doob}
The following is based on \cite{chung2005markov}, which provides a level of generality necessary for our purposes. For a more explicit discussion in the univariate case, see also \cite{Borodin}.

We first note that the Green kernel can be found explicitly for the Brownian motion:
\begin{remark}\label{ex:G_r_BM}
Let $Z=W$ be a standard $d$-dimensional Brownian motion, $m$ Lebesgue measure, then
\[		p_t(x,y)=(2\pi t)^{-d/2}\exp\left(-\frac{\lvert x-y\rvert^2}{2t}\right).
\]
	In this case, \citep[p.\ 146]{erdelyi} yields
	\[
		G_r(x,y)=G_r(\lvert x-y\rvert)=(2\pi)^{-d/2}2\left(\frac{\lvert x-y\rvert^2}{2r}\right)^{(2-d)/4}K_{(d-2)/2}
		\left(\lvert x-y\rvert\sqrt{2r}\right),
	\]
	where $K_\nu$ is the modified Bessel
	function of the second kind as defined in \cite{lebedev}, p.\ 109. The function $G_r(\cdot)$ has a pole in $0$ and is decreasing.
\end{remark}

Now we give the remaining proofs:
\begin{proof}[Proof of Proposition \ref{prop:properties_h-transf}]

The finiteness of\ $\zeta$ follows from
 \citep[Theorem 13.50]{chung2005markov}:
		\[
			\PR_x(\zeta<\infty)
			=\int_0^\infty\int\e^{-rt}\frac{1}{h(x)}p_t(x,y)\kappa(dy)dt=\frac{1}{h(x)}\int {G_r(x,y)}\kappa(dy)=1.
		\]
            The second statement follows by using the generator identity
\[\mathbb{A}^h f(x) = \frac{1}{h(x)} \left( \mathbb{A}(h(x) f(x)) - r h(x) f(x) \right),\]
taking into account that $h$ is $r$-harmonic (i.e., $(\mathbb{A}-r)h(x)=0$) outside the support of $\beta$.

The third statement follows from \citep[Theorem 13.39]{chung2005markov} (after correction of an obvious typo), where the special case $x=x_0$ yields 
\[
    		    \PR_{x_0}(Z^h_{\zeta-}\in dy)=\frac{G_r(x_0,y)}{h(x_0)}\kappa(dy)=G_r(x_0,y)\kappa(dy)=\beta(dy).
\]
\end{proof}

\begin{proof}[Proof of Proposition \ref{prop:time_inversion}]
The first claim holds by combining  \citep[Theorem 13.34]{chung2005markov} with Proposition \ref{prop:properties_h-transf}. The other two are direct consequences of Proposition \ref{prop:properties_h-transf}.
\end{proof}

\section{Stochastic control}\label{app:stoch_control}
For the proof of Proposition \ref{prop:stoch_control}, we start with the following
\begin{lemma}\label{lem:PDE_contr}
    Let $h$ be as above and $g(x)\coloneqq-\log h(x)$. Then, outside of the support of $\beta$, it holds that
    \[\mathbb A g(x)+r-\frac{1}{2}\nabla g(x)^\top\sigma(x)\sigma^\top(x)\nabla g(x)=0.\]
\end{lemma}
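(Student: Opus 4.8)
The plan is to treat this as a direct consequence of a Hopf--Cole-type change of variables applied to the $r$-harmonicity of $h$. Recall that $\mathbb{A}$ denotes the generator of the diffusion $Z$, acting on $C^2$ functions by $\mathbb{A} f = \langle b, \nabla f\rangle + \tfrac{1}{2}\operatorname{tr}(\sigma\sigma^\top \nabla^2 f)$, and that by construction $h$ is $r$-harmonic on the open set $\state \setminus \operatorname{supp}\beta$, i.e. $\mathbb{A} h = r h$ there -- this is precisely the fact already invoked in the proof of Proposition \ref{prop:properties_h-transf}. Since $h$ is a strictly positive $r$-excessive function that is $r$-harmonic on $\state \setminus \operatorname{supp}\beta$, and since away from $\operatorname{supp}\beta$ the representation $h(x) = \int G_r(x,y)\,\kappa(dy)$ integrates a smooth kernel against a finite measure, elliptic regularity for the non-degenerate operator $\mathbb{A} - r$ gives $h \in C^2$ and $h > 0$ there; hence $g = -\log h$ is well-defined and twice continuously differentiable on that set. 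I would record this smoothness/positivity with a pointer to \cite{chung2005markov} rather than reprove it.

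The computation itself is then short. First I would compute the derivatives of $g = -\log h$,
\[
\nabla g = -\frac{\nabla h}{h}, \qquad \nabla^2 g = -\frac{\nabla^2 h}{h} + \frac{\nabla h\,\nabla h^\top}{h^2}.
\]
Substituting into $\mathbb{A} g = \langle b, \nabla g\rangle + \tfrac12 \operatorname{tr}(\sigma\sigma^\top \nabla^2 g)$ and grouping the terms that reconstitute $\mathbb{A} h$ yields
\[
\mathbb{A} g = -\frac{1}{h}\Big(\langle b, \nabla h\rangle + \tfrac12\operatorname{tr}(\sigma\sigma^\top\nabla^2 h)\Big) + \frac{1}{2h^2}\,\nabla h^\top \sigma\sigma^\top \nabla h = -\frac{\mathbb{A} h}{h} + \frac{1}{2}\,\nabla g^\top \sigma\sigma^\top \nabla g,
\]
where the last equality uses $\nabla h / h = -\nabla g$. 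Inserting $\mathbb{A} h = r h$ then gives $\mathbb{A} g = -r + \tfrac12 \nabla g^\top \sigma\sigma^\top \nabla g$, which rearranges to the claimed identity $\mathbb{A} g + r - \tfrac12 \nabla g^\top \sigma\sigma^\top \nabla g = 0$ on $\state \setminus \operatorname{supp}\beta$.

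There is no genuine obstacle here: the argument is the elementary identity $\mathbb{A}(-\log h) = -\mathbb{A} h/h + \tfrac12 \lVert \sigma^\top \nabla \log h\rVert^2$ specialized to an $r$-harmonic $h$. The only point requiring a little care is the justification that $h$ is $C^2$ and strictly positive on $\state \setminus \operatorname{supp}\beta$, so that the pointwise chain rule and the identity $\mathbb{A} h = r h$ may be applied classically; this is exactly what the regularity theory for $\mathbb{A} - r$ (invoked above) provides.
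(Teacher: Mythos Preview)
Your proof is correct and follows exactly the paper's approach: the paper also uses the $r$-harmonicity $\mathbb{A}h - rh = 0$ outside $\operatorname{supp}\kappa$ (equivalently $\operatorname{supp}\beta$) and then invokes a ``straightforward calculation'' for $\mathbb{A}g$, which you have simply written out in detail via the Hopf--Cole substitution. Your added remarks on the $C^2$ regularity and positivity of $h$ make explicit what the paper leaves implicit.
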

\begin{proof}
    Outside of $\supp \kappa$, it holds that $\mathbb A h(x)-rh(x)=0$, and a straightforward calculation shows that
    \[
        \mathbb A g(x)=-r+\frac{1}{2}\nabla g(x)^\top\sigma(x)\sigma^\top(x)\nabla g(x).
    \]
\end{proof}
Now, we follow the standard verification approach to stochastic stopping and control problems, see \cite{oksendal2019stochastic}, Chapter 5, where we assume the appropriate standard regularity assumptions for the objects without explicitly mentioning them. 
Outside of $\supp \kappa$, using the notation 
\[
\mathbb{A}^u v(x) = \langle \sigma(x) u + b(x), \nabla v(x) \rangle + \frac{1}{2} \operatorname{Tr}\left[\Sigma(x) \nabla^2 v(x)\right],
\]
where $\Sigma(x) = \sigma(x)\sigma(x)^\top$, it holds
\begin{align*}
    \inf_u \left(\mathbb{A}^u v(x) + k(u)\right) 
    =& \langle b(x), \nabla v(x) \rangle + \frac{1}{2} \operatorname{Tr}\left[\Sigma(x) \nabla^2 v(x)\right] + r + \inf_u \left( \langle \sigma(x) u, \nabla v(x) \rangle + \frac{1}{2} \|u\|^2 \right) \\
    =& \langle b(x), \nabla v(x) \rangle + \frac{1}{2} \operatorname{Tr}\left[\Sigma(x) \nabla^2 v(x)\right] + r - \frac{1}{2} \|\sigma(x)^\top \nabla v(x)\|^2
\end{align*}
with minimiser $u = -\sigma(x)^\top \nabla v(x)$. Using $v(x)=g(x)=-\log h(x)$ and Lemma \ref{lem:PDE_contr}, we see that $g(x)$ fulfills the HJB equation for problem \eqref{eq:control}.
Following the steps in the verification procedure in \cite{oksendal2019stochastic}, p.\ 92ff., it is easily seen that this solution to the HJB equation is indeed a solution to \eqref{eq:control}, proving the first part of Proposition \ref{prop:stoch_control}.

Now, we study the connection to the KL divergence and write $Y=Z^h$. Note that
\[
    \frac{d\PR^{Z^u}}{d\PR^Y}\Bigg|_{\mathcal F_\zeta}=\exp\left(\frac{1}{2}\int_0^\zeta \|u_t\|^2dt-M_\zeta\right),
\]
for some (local) martingale $M$ with\ $M_0=0$.
For the killed process $Y$, by the definition of the $h$-transform,
\[
    \frac{d\PR^{Z}}{d\PR^{Y}}\Bigg|_{\mathcal F_\zeta}=\frac{\e^{r\zeta}h(Z_0)}{h(Z_\zeta)}.
\]
This yields 
\begin{align*}
    \log \frac{d\PR^{Z^u}}{d\PR^{Y}}\Bigg|_{\mathcal F_\zeta}&=\log \frac{d\PR^{Z^u}}{d\PR^Z}\frac{d\PR^{Z}}{d\PR^{Y}}\Bigg|_{\mathcal F_\zeta}=\frac{1}{2}\int_0^\zeta \|u_t\|^2dt+r\zeta+\log h(Z^u_0)-\log h(Z^u_\zeta)+M_\zeta.
\end{align*}
 We obtain that 
\[
    \mathrm{KL}(\PR_x^{Z^u}\big|_{\mathcal F_\zeta}\,\Vert\,\PR_x^{Y}\big|_{\mathcal F_\zeta})=\Ex_x\log \frac{d\PR^{Z^u}}{d\PR^{Z}}\Bigg|_{\mathcal F_\zeta}=J(u,x)+\log h(x)\geq v(x)+\log h(x)=0
\]
with equality for $u=u^*$, so that the KL divergence is the variational gap in problem \eqref{eq:control}. 

\section{Score matching and manifold learning}\label{app:score}
\begin{proof}[Proof of Proposition \ref{prop:score}]
It holds that
\begin{align}\label{eq:zerleg}
    \mathcal{L}_{\text{ex}}(\theta) &= \Ex\Big[\int_{0}^{\zeta_\varepsilon} \lVert \nabla_y \log \cev{h}(\cev{Z^h_t}) - s_\theta(\cev{Z^h_t}) \rVert^2 \, d{t} \Big]\nonumber\\
    &= \Ex\Big[\int_{0}^{\zeta_\varepsilon} \lvert s_\theta(\cev{Z^h_t}) \rvert^2 \, dt \Big] - 2\Ex\Big[\int_{0}^{\zeta_\varepsilon} \langle \nabla_y \log \cev{h}(\cev{Z^h_t}), s_\theta(\cev{Z^h_t}) \rangle \, dt \Big] + C,
\end{align}
where $C$ is independent of $\theta$. 
Let $\tilde{\beta}(dy) = \PR_\alpha(Z^h_{\zeta-} \in dy)$. Since the time reversal of the $h$-transform $Z^{\cev{h}}$ started in $\tilde{\beta}$ is equal in law to the $h$-transform $Z^h$ started in $\alpha$, we obtain from Proposition \ref{prop:time_inversion} that
\[ \frac{G_r(x,y)}{\cev{h}(x)} \, \tilde{\beta}(dx) =  h(y)\frac{G_r(y,x)}{h(y)\cev{h}(x)}\, \tilde{\beta}(dx) = h(y) \,\PR_y(Z^h_{\zeta-} \in dx).\]
Using this and assuming sufficient integrability properties that allow the application of Fubini's theorem and pulling the derivative inside the integral, we can calculate for the second term as follows:
\begin{align*}
    \Ex\Big[\int_{0}^\zeta \big\langle \nabla_y \log \cev{h}(\cev{Z^h_t}),s_\theta(\cev{Z^h_t})\big\rangle \, dt \Big] 
    &= \int \int  \langle \nabla_y \log \cev{h}(y),s_\theta(y)\rangle \frac{\cev{h}(y)}{\cev{h}(x)}G_r(x,y)\, m(d y)\, \tilde{\beta}(dx)\\
    &= \int \langle \nabla_y \log \cev{h}(y),s_\theta(y)\rangle \cev{h}(y) h(y)\, m(d y) \\
    &= \int \langle \nabla_y \cev{h}(y),s_\theta(y)\rangle h(y)\, m(d y) \\
    &= \int\int \langle \nabla_y \frac{G_r(y,z)}{h(z)}, s_\theta(y) \rangle h(y)\, m(d y)\, \alpha(d z) \\
    &= \int\int \langle \nabla_y \log G_r(y,z), s_\theta(y) \rangle\frac{h(y)}{h(z)}G_r(y,z)\, m(d y)\, \alpha(d z) \\
     &= \int\int \langle \nabla_y \log G_r(y,z), s_\theta(y) \rangle\frac{h(y)}{h(z)}G_r(z,y)\, m(d y)\, \alpha(d z) \\
    &= \int \Ex_z\Big[\int_0^\zeta \langle \nabla_{Z^h_t} \log G_r(Z^h_t,Z^h_0), s_\theta(Z^h_t) \rangle  d{t}\Big] \, \alpha(dz)\\
    &= \Ex_\alpha\Big[\int_{0}^\zeta  \langle\nabla_{Z^h_t} \log G_r(Z^h_t,Z^h_0) , s_\theta(Z^h_t)\rangle \, dt \Big].
\end{align*}
This gives 
\begin{align*} 
&\Ex\Big[\int_{0}^{\zeta_\varepsilon} \big\langle \nabla_y \log \cev{h}(\cev{Z^h_t}),s_\theta(\cev{Z^h_t})\big\rangle \, dt \Big]\\
&\,= \Ex\Big[\int_{0}^{\zeta} \big\langle \nabla_y \log \cev{h}(\cev{Z^h_t}),s_\theta(\cev{Z^h_t})\big\rangle \, dt \Big] - \Ex\Big[\int_{\zeta_\varepsilon}^{\zeta} \big\langle \nabla_y \log \cev{h}(\cev{Z^h_t}),s_\theta(\cev{Z^h_t})\big\rangle \, dt \Big]\\ 
&\,= \Ex\Big[\int_{0}^\zeta  \langle\nabla_{Z^h_t} \log G_r(Z^h_t,Z^h_0) , s_\theta(Z^h_t)\rangle \, dt \Big] - \Ex\Big[\int_{0}^{\sigma_\varepsilon} \big\langle \nabla_y \log \cev{h}({Z^h_t}),s_\theta({Z^h_t})\big\rangle \, dt \Big]\\
&\,= \Ex\Big[\int_{\sigma_\varepsilon}^{\zeta}  \langle\nabla_{Z^h_t} \log G_r(Z^h_t,Z^h_0) , s_\theta(Z^h_t)\rangle \, dt \Big] + \Ex\Big[\int_{0}^{\sigma_\varepsilon} \big\langle \nabla_{Z^h_t} \log \tfrac{G_r(Z^h_t, Z^h_0)}{\cev{h}({Z^h_t})},s_\theta({Z^h_t})\big\rangle \, dt \Big].
\end{align*}
Moreover, the first term satisfies
\[
    \Ex\Big[\int_{0}^{\zeta_\varepsilon} \lVert s_\theta(\cev{Z^h_t}) \rVert^2 \, dt \Big] = \Ex\Big[\int_{\sigma_\varepsilon}^{\zeta} \lVert s_\theta(Z^h_t) \rVert^2 \, dt \Big].
\]
Substituting these results into \eqref{eq:zerleg}, we obtain
\[
    \mathcal{L}_{\text{ex}}(\theta)  
    = \Ex\Big[\int_{\sigma_\varepsilon}^\zeta \lVert \nabla_{Z^h_t} \log G_r(Z^h_t,Z^h_0) - s_\theta(Z^h_t) \rVert^2 \, dt \Big] - 2\Ex\Big[\int_{0}^{\sigma_\varepsilon} \big\langle \nabla_{Z^h_t} \log \tfrac{G_r(Z^h_t, Z^h_0)}{\cev{h}({Z^h_t})},s_\theta({Z^h_t})\big\rangle \, dt \Big]  + C',
\]
where $C'$ is a constant independent of $\theta$.
\end{proof}

\begin{proof}[Proof of Lemma \ref{lem:char_score}]
Using \eqref{eq:back_h} and Proposition \ref{prop:time_inversion}, we find for any $x \notin \supp \alpha$,
\begin{align*}
\nabla \log \cev{h}(x) = \frac{1}{\cev{h}(x)} \int \nabla_x G_r(x,y) \frac{1}{h(y)} \, \alpha(dy) &= \int \nabla_x \log G_r(x,y) \frac{G_r(x,y)}{\cev{h}(x) h(y)} \,\alpha(dy)\\
&= \E_x\big[\nabla_x \log G_r(x, Z^{\cev{h}}_{\zeta-}) \big].
\end{align*}
By Proposition \ref{prop:time_inversion}, the time reversed process $\cev{Z^h}$ has the same distribution as the diffusion process $Z^{\cev{h}}$ started in the forward terminal distribution $\beta_h = \PP_\alpha(Z^{h}_{\zeta-} \in \cdot)$. For any Borel sets $A,B$ it follows
\begin{align*} 
\int_B \PP_\alpha(Z^h_0 \in A \mid Z^h_{\zeta_-} = x) \, \PP_\alpha(Z^h_{\zeta_-} \in dx) &= \PP_\alpha(Z^h_0 \in A, Z^h_{\zeta_-} \in B) \\
&= \PP_{Z^{\cev{h}}_0 \sim \beta}(Z^{\cev{h}}_{\zeta_-} \in A, Z^{\cev{h}}_0 \in B)\\ 
&= \int_B \PP(Z^{\cev{h}}_{\zeta_-} \in A \mid Z^{\cev{h}}_0 = x)\, \beta_h(dx)\\ 
&= \int_B \PP(Z^{\cev{h}}_{\zeta_-} \in A \mid Z^{\cev{h}}_0 = x)\, \PP_\alpha(Z^h_{\zeta_-} \in dx),
\end{align*}
and thus, $\PP(Z^{\cev{h}}_{\zeta_-} \in \diff{y} \mid Z^{\cev{h}}_0 = x)$ is a regular conditional probability for $Z_0^h$ given $Z^h_{\zeta_-}$ w.r.t.\ $\PP_\alpha$. Consequently, uniqueness of regular conditional probabilities gives
\[\PP_\alpha(Z^h_0 \in dy \mid Z^h_{\zeta_-} = x) = \PP(Z^{\cev{h}}_{\zeta_-} \in d y \mid Z^{\cev{h}}_0 = x), \quad \text{for } \PP_\alpha(Z^h_{\zeta_-} \in \cdot)\text{-a.e. } x.\] This together with \eqref{eq:score_char1} and symmetry of $G_r$ yields \eqref{eq:score_char2}. 
\end{proof}

\end{document}